\documentclass[11pt,twoside]{article}
\usepackage{fancyhdr}
\usepackage[colorlinks,citecolor=blue,urlcolor=blue,linkcolor=blue,bookmarks=false]{hyperref}
\usepackage{amsfonts,epsfig,graphicx}
\usepackage{afterpage}
\usepackage{amsmath,amssymb,amsthm} 
\usepackage{fullpage}
\usepackage[T1]{fontenc} 
\usepackage{epsf} 
\usepackage{graphics} 
\usepackage{amsfonts,amsmath}
\usepackage[sort,numbers]{natbib} 
\usepackage{psfrag,xspace}
\usepackage{color,etoolbox}
\usepackage{tikz}

\newcommand{\rot}{\mathfrak{R}_{\text{OT}}}
\newcommand{\rvalid}{\mathfrak{R}_{\text{valid}}}
\newcommand{\lot}{\mathcal{L}_{\text{OT}}}
\newcommand{\lots}{\mathcal{L}^\text{S}_{\text{OT}}}

\newcommand{\lvalid}{\mathcal{L}_{\text{valid}}}
\newcommand{\lvalidp}{\mathcal{L}_{\text{valid,p}}}
\newcommand{\lvalids}{\mathcal{L}^S_{\text{valid}}}

\newtheorem{theorem}{Theorem} 
 
\newtheorem{lemma}{Lemma}
\newtheorem{proposition}{Proposition}
\newtheorem{corollary}{Corollary}

\DeclareMathOperator{\TV}{TV}

\DeclareMathOperator{\sgn}{sgn}

\DeclareMathOperator*{\argmin}{arg\,min}

\newcommand{\OT}{\mathrm{OT}}

\newcommand{\targetclass}{\mathfrak{N}}
\newcommand{\validminimaxrisk}{\mathfrak{M}_{\text{valid}}}
\newcommand{\validsminimaxrisk}{\mathfrak{M}^S_{\text{valid}}}

\newcommand{\otminimaxrisk}{\mathfrak{M}_{\text{OT}}}
\newcommand{\wtwominimaxrisk}{\mathfrak M_{W_2}}

\usetikzlibrary{arrows.meta,positioning}

\begin{document}

\begin{center} {\LARGE{\bf{The Fundamental Limits of \\
\vspace{.3cm}

Valid Transport Map Estimation}}}
\\

\vspace*{.3in}

{\large{
\begin{tabular}{ccccc}
Sivaraman Balakrishnan$^{\dagger\ddagger}$ \\
\end{tabular}

\vspace*{.1in}

\begin{tabular}{ccc}
Department of Statistics and Data Science$^{\dagger}$ \\
Machine Learning Department$^{\ddagger}$ \\
\end{tabular}

\begin{tabular}{c}
Carnegie Mellon University, \\
Pittsburgh, PA 15213.
\end{tabular}

\vspace*{.2in}
}}

\vspace*{.2in}

\today
\vspace*{.2in}
\begin{abstract}

Many modern generative modeling methods, including diffusion models, normalizing flows, and flow matching, estimate transport maps or plans between distributions without explicitly targeting an \emph{optimal} transport (OT) map. In applications like generative modeling, the transport cost itself is irrelevant, and this makes it natural to target maps which are more tractable from either a statistical or computational standpoint. In this short note, we formalize the task of estimating \emph{any} valid transport map in a rigorous minimax framework. 
One consequence of this framing is that it yields sample complexity lower bounds for any method whose learned object is evaluated as a transport map or plan, including flow matching and diffusion-based generative models, in settings where direct analysis would be challenging due to the analytic complexity of the methods and their target maps.
We observe that, under standard, though strong, stability assumptions from the OT literature, estimating any valid transport map is statistically as hard as estimating the OT map. 
We complement these results with some examples showing that when these stability assumptions fail, alternative transport maps can be learned substantially more accurately than the OT map. Our minimax framing provides a rigorous foundation for understanding the statistical limits of modern transport-based generative methods and clarifies when targeting sub-optimal maps can provide real statistical advantages.
\end{abstract}
\end{center}

\section{Introduction}
Learning transformations between probability distributions is a central problem in modern statistics and machine learning. Such transformations 
provide a mechanism for mapping a simple reference distribution (e.g., a Gaussian) to a complex target distribution. A wide variety of modern generative modeling methods can be interpreted through this lens. For example, normalizing flows explicitly parameterize invertible transport maps between distributions \cite{rezende2015variational,dinh2017density,papamakarios2021normalizing}, while diffusion models and score-based generative models construct stochastic processes that implicitly define transformations between a reference distribution and the target distribution \cite{sohl2015deep,ho2020denoising,song2021score}. More recent approaches such as flow matching and rectified flows also explicitly frame generative modeling as learning transport maps between distributions \cite{lipman2023flowmatching,liu2022rectified,albergo2023stochastic}. These methods have demonstrated remarkable empirical success across a wide range of generative modeling tasks.

A classical mathematical formulation of the problem of transforming one distribution to another is given by optimal transport (OT) \cite{villani2003topics,villani2009optimal}. In this framework, the optimal transport map between two distributions is defined as the map that pushes one distribution to the other while minimizing a specified transport cost. 
Optimal transport has become an important tool in machine learning and statistics, with applications ranging from generative modeling and domain adaptation to distributional robustness and fairness~\cite{peyre2019computational}.
Part of the success of OT is that it pins down a clear target of inference, enabling us to delve deeper into statistical and computational issues.
At the same time, computing or estimating optimal transport maps from samples can be challenging, particularly in high dimensions.

In many applications, the transport cost itself is irrelevant. The goal is simply to produce samples from a target distribution or to transform data between domains; any valid transport map that pushes the source distribution to the target distribution suffices. This observation motivates the widespread use of methods that do not explicitly target the optimal transport map, but instead learn alternative transport mechanisms. Indeed, many modern generative modeling methods -- including diffusion models, flow matching, and various triangular or autoregressive transport maps -- can be interpreted as learning transport maps or stochastic transport plans that are generally sub-optimal with respect to the classical OT objective.

This raises a natural question: is learning an arbitrary transport map statistically easier than learning the optimal transport map? A common intuition is that this should be the case. Optimal transport maps can be sensitive to perturbations in the underlying distributions, and estimating them from samples often requires strong regularity assumptions \cite{caffarelli1992regularity,figalli2017monge}. In contrast, modern generative modeling methods often appear empirically stable and effective even in settings where optimal transport maps may be difficult to estimate. These observations have led to a belief that learning a “good enough’’ transport map may be fundamentally easier, both computationally and \emph{statistically}, than learning the optimal one.

Despite this intuition, the statistical difficulty of estimating general transport maps has not been systematically studied. While there is a growing literature on the statistical estimation of optimal transport maps \cite{manole2024plugin,deb2021,hutter2021}, much less is known about the fundamental limits of estimating arbitrary transport maps between distributions. Moreover, for many modern generative modeling methods the target map is analytically complex, making it difficult to study their statistical properties directly.

In this paper, we introduce a simple minimax framework for studying the statistical limits of transport map estimation. Instead of focusing on the optimal transport map, we consider the broader problem of estimating a measurable map that transports one distribution to another. We measure performance using natural notions of risk that quantify the discrepancy between the estimated transport map and the set of valid transport maps.

Our main results show that the task of estimating a valid transport map lends itself quite naturally to statistical analysis. Intuitively, any successful transport map estimator must at least induce an accurate estimate of the target distribution in Wasserstein distance, and the difficulty of this latter task serves as a fundamental limit for the estimation of a transport map. Since the optimal transport map is itself a valid transport map, estimating a valid map is never harder than estimating the optimal one. Stability assumptions commonly imposed in the literature on OT map estimation ensure that the OT map varies smoothly with the underlying distributions. Under these stability assumptions, the risk of OT map estimation is upper bounded by the risk of estimating the target distribution in the Wasserstein sense. We show in this paper that, under stability assumptions, estimating a valid transport map and estimating the optimal transport map have the same statistical difficulty. 

At the same time, our results also show that the equivalence between optimal and sub-optimal transport maps can fail in the absence of stability. We provide simple examples in which the optimal transport map is either moderately or highly unstable and statistically difficult to estimate, while alternative transport maps can be learned at substantially faster rates. This separation demonstrates that the potential advantages of modern transport-based generative methods arise precisely in regimes where stability assumptions break down.

Our framing immediately yields new lower bounds for a range of modern generative modeling methods. 
In particular, our results imply sample complexity lower bounds for formulations of methods such as flow matching and diffusion-based generative models whose learned object is evaluated as a transport map or plan.
Directly analyzing the statistical limits of these methods would be challenging due to the analytic complexity of the transformations they learn. By instead studying the broader problem of valid transport map estimation, we obtain these bounds in a unified and conceptually simple manner.

Taken together, our results provide a principled statistical perspective on the problem of learning transport maps. The minimax framework introduced here grounds the discussion of statistical limits in modern transport-based generative modeling and helps identify settings in which targeting sub-optimal transport maps can provide genuine statistical advantages.

\subsection{Notation}
For a metric space $E$, we write $\mathcal P(E)$ for the set of Borel probability measures on $E$. For $p\geq 1$, we write $\mathcal P_p(\mathbb R^d)$ for the set of probability measures on $\mathbb R^d$ with finite $p$th moment. Given probability measures $\mu,\nu\in\mathcal P(\mathbb R^d)$, we write $\Pi(\mu,\nu)$ for the set of couplings of $\mu$ and $\nu$. The $p$-Wasserstein distance is denoted by $W_p$, so that $W_p^p(\mu,\nu)
=
\inf_{\pi\in\Pi(\mu,\nu)}
\int \|x-y\|_2^p\,d\pi(x,y).$
We use $\rho(P,Q)$ for the Hellinger affinity,
$\rho(P,Q)
:=
\int \sqrt{dP\,dQ},$
and $H^2(P,Q):=1-\rho(P,Q)$ for the squared Hellinger distance. Finally, for nonnegative sequences $a_n,b_n$, we write $a_n\lesssim b_n$ if $a_n\leq Cb_n$ for a constant $C$ independent of $n$, and write $a_n\asymp b_n$ if both $a_n\lesssim b_n$ and $b_n\lesssim a_n$. We write $a_n\ll b_n$ when the ratio $a_n/b_n$ is chosen sufficiently small. 

\section{Background}
Inspired by methods in generative modeling and by the growing literature in statistical optimal transport, we focus on the following setup. 
We have a \emph{known} source distribution $\mu$, and i.i.d. samples $Y_1, \ldots, Y_n$ from an unknown target distribution $\nu$ both supported on $\mathbb{R}^d$.  Our goal is to estimate a \emph{transport map} $T$. Informally, a transport map $T:\mathbb{R}^d \to \mathbb{R}^d$ between $\mu$ and $\nu$ is a map such that for $X \sim \mu$, we have that $T(X) \sim \nu$. More formally, a transport map between $\mu$ and $\nu$
is a Borel-measurable function such that $T_{\#} \mu := \mu(T^{-1}(\cdot)) = \nu$. 

If $\mu$ is non-atomic 
there is always at least one valid transport map between $\mu$ and $\nu$\footnote{Throughout, whenever we study deterministic maps, we assume either that $\mu$ is non-atomic, 
so that valid maps to all target distributions under consideration exist, or else we use the convention that the infimum over an empty set is $+\infty$.}.
There are often many transport maps between $\mu$ and $\nu$. 
Given this non-uniqueness, the literature on \emph{optimal transport} has focused on a particular choice of transport map. 
Given a cost $c: \mathbb{R}^d \times \mathbb{R}^d \to \mathbb{R}$ (most popularly, the squared Euclidean distance), the OT map is defined as:
\begin{align*}
T_0 := \underset{T: T_{\#} \mu = \nu}{\text{arg min}}~\int c(X,T(X)) d\mu(X). 
\end{align*}
Brenier's celebrated polar factorization theorem~\cite{brenier1991} shows that, for the squared Euclidean cost under appropriate regularity conditions, any valid transport map can be decomposed into the composition of 
the OT map with a measure-preserving map (i.e. a map which maps $\mu$ onto itself). 
These facts highlight that the OT map is a natural parsimonious choice to target.

On the other hand, methods like diffusion models learn stochastic maps (i.e. couplings) between the source and target. A coupling $\pi$ is a joint distribution on $\mathbb{R}^d \times \mathbb{R}^d$ with first marginal $\mu$ and second marginal $\nu$, and it prescribes a randomized transport plan between $\mu$ and $\nu$. More formally, we denote the set of couplings as $\Pi(\mu,\nu)$, i.e.
\begin{align*}
\Pi(\mu,\nu) = \{ \pi \in \mathcal{P}(\mathbb{R}^d \times \mathbb{R}^d): \pi( \cdot \times \mathbb{R}^d) = \mu, \pi( \mathbb{R}^d \times \cdot) = \nu \}.
\end{align*} 
Unlike transport maps, couplings always exist. Once again a canonical choice is an OT coupling $\pi_0$ (i.e. a coupling that minimizes a prescribed cost on average). 
For costs which satisfy certain regularity conditions (the squared Euclidean cost being an example), when the source $\mu$ is absolutely continuous with respect to the Lebesgue measure, the OT coupling is unique and it is concentrated on a graph (i.e. corresponds directly to a unique OT map).

\subsection{The Minimax Framework}
We first briefly review the minimax framework for OT map estimation which has been used in a series of prior works~\cite{manole2024plugin,deb2021,hutter2021,balakrishnan2025statistical,balakrishnan2026stability}.
The minimax framework provides a 
foundation for comparing estimators of the OT map based on their sample efficiency. This
framework also allows the statistician to characterize the fundamental limits of estimation
from random samples and consequently to reason about optimal estimators. 
The literature on OT maps has largely focused on the squared Euclidean risk. Concretely, given an 
estimator $\widehat{T}$, we can evaluate it by comparing it to the OT map $T_0$ via its loss:
\begin{align*}
\lot(\widehat{T}) := \int \|\widehat{T}(X) - T_0(X)\|_2^2 d\mu(X).
\end{align*}
Though we primarily focus on the squared Euclidean loss, the results we develop in this paper also naturally extend to general $L_p$ losses for $p > 1$. 
To evaluate stochastic methods it makes sense to generalize this loss to:
\begin{align*}
\lots(\widehat{\pi}) := \int W_2^2(\widehat{\pi}(\cdot | X), \pi_0( \cdot | X)) d\mu(X).
\end{align*}
We focus the main paper on deterministic transport maps and briefly address the extension of our results to stochastic maps in Appendix~\ref{app:stochastic}.
The loss is a random variable (since the estimator $\widehat{T}$ is sample dependent), and it is standard to instead focus on studying the risk
(its expected value):
\begin{align*}
\rot(\widehat{T}) = \mathbb{E} \lot(\widehat{T}). 
\end{align*}
While this loss and risk are reasonable as a benchmark for estimating the OT map, they are of course not reasonable measures to study procedures like rectified flow or normalizing flows, which explicitly target alternative non-optimal maps.
In this paper, we focus instead on the following loss and risk:
\begin{align*}
\lvalid(\widehat{T}) = \left[\ \inf_{T_{\#} \mu = \nu} \int \|\widehat{T}(X) - T(X)\|_2^2 d\mu(X)\right],~~~\rvalid(\widehat{T}) = \mathbb{E} \lvalid(\widehat{T})
\end{align*}
This loss measures how close $\widehat{T}$ is to the set of valid transport maps, rather than to a specific canonical choice.
Equivalently, $\lvalid$ is the squared $L^2(\mu)$ distance from $\widehat T$ to the set ${T : T_{\#}\mu = \nu}$.
It is clear that:
\begin{align*}
\lot(\widehat{T}) \geq \lvalid(\widehat{T}),~~~\text{and}~~~
\rot(\widehat{T}) \geq \rvalid(\widehat{T}),
\end{align*}
and part of the purpose of this note is to understand the gap between these quantities. We will show both
that $\rvalid$ is amenable to analysis and also does provide some insights into when targeting suboptimal maps can be useful.

With a notion of risk in place, the main object of study is often the \emph{minimax} risk, i.e.
\begin{align*}
\validminimaxrisk(\targetclass) = \inf_{\widehat{T}} \sup_{\nu \in \targetclass} \rvalid(\widehat{T})~~\text{and}~~\otminimaxrisk(\targetclass) = \inf_{\widehat{T}} \sup_{\nu \in \targetclass} \rot(\widehat{T}),
\end{align*}
where $\targetclass$ is some collection of potential target distributions\footnote{Throughout the paper, we suppress the dependence of these risks on the known source distribution $\mu$ and on the sample size $n$.}.
Prior work~\citep{hutter2021, divol2022a} has used variants of this definition, where rather than restrict the class of target distributions $\targetclass$, they restrict the source distribution $\mu$ and the optimal transport map $T_0$ between $\mu$ and $\nu$ to satisfy regularity conditions. 
Other work~\citep{manole2024plugin,balakrishnan2026stability} has used both regularity of the transport map and direct restrictions on the class of target distributions.

\subsection{Statistical Estimation of OT Maps}
There is by now a substantial literature on the statistical estimation of OT maps from samples; see, for instance, \cite{manole2024plugin,deb2021,hutter2021} and the recent survey \cite{balakrishnan2025statistical}. Some of this literature studies estimators which first estimate the underlying distributions and then plug these estimates into the OT problem. The analysis of these estimators often rests on a strong regularity condition on the population OT map. A representative assumption is that the OT map from $\mu$ to $\nu$ is of the form
$T_0 = \nabla \varphi_0,$
where the so-called \emph{Brenier potential} $\varphi_0$ is both smooth and strongly convex. For instance, when $\varphi_0$ is twice differentiable, one may assume that for some constants $0 < \alpha \leq \beta < \infty$,
\begin{align*}
\alpha I \preceq \nabla^2 \varphi_0(x) \preceq \beta I, \qquad x \in \mathbb{R}^d.
\end{align*}
This condition is restrictive, but it appears naturally in several classical settings, for example through Caffarelli-type regularity and contraction results for log-concave measures \cite{caffarelli1992regularity,figalli2017monge}.

A key consequence of this regularity condition is that under it the OT map is stable with respect to perturbations of the input distributions. We recall one version of this fact (see~Theorem 3 in~\cite{balakrishnan2026stability}). Given probability measures $\mu$ and $\widehat{\nu}$, where $\mu$ is absolutely continuous with respect to the Lebesgue measure, let $\widehat{T}_{\OT}$ denote 
the OT map from $\mu$ to $\widehat{\nu}$.
\begin{lemma}[Stability of the OT map]
\label{lem:ot-stability}
Suppose that $\nu = (\nabla \varphi_0)_{\#}\mu$, where $\varphi_0$ is $\alpha$-strongly convex and $\beta$-smooth. Then, for any probability measure $\widehat{\nu}$ with finite second moments,
\begin{align*}
\int \|\widehat{T}_{\OT}(x)-T_0(x)\|_2^2 d\mu(x)
\leq \frac{\beta}{\alpha} W^2_2(\widehat{\nu},\nu).
\end{align*}
\end{lemma}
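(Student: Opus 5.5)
The plan is to compare the two maps through the convex conjugate of the Brenier potential. We sandwich a single "Bregman-type" functional between a lower bound coming from the smoothness of $\varphi_0$ and an upper bound coming from its strong convexity. Let $\psi_0 := \varphi_0^*$ be the convex conjugate. Since $\varphi_0$ is $\alpha$-strongly convex and $\beta$-smooth, $\psi_0$ is $1/\beta$-strongly convex and $1/\alpha$-smooth. Moreover, $\nabla\varphi_0$ is a bijection of $\mathbb R^d$ with inverse $\nabla\psi_0$, so $\nabla\psi_0(T_0(x)) = x$.

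Write $\widehat T_{\OT}=\nabla\widehat\varphi$ for the Brenier map from $\mu$ to $\widehat\nu$, which exists since $\mu$ is absolutely continuous. Consider
\begin{align*}
D := \int \Big[\psi_0(\widehat T_{\OT}(x)) - \psi_0(T_0(x)) - \langle x, \widehat T_{\OT}(x) - T_0(x)\rangle\Big]\, d\mu(x).
\end{align*}

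\emph{Lower bound.} Since $x=\nabla\psi_0(T_0(x))$, the integrand is the Bregman divergence of $\psi_0$ between $\widehat T_{\OT}(x)$ and $T_0(x)$. By $1/\beta$-strong convexity of $\psi_0$,
\begin{align*}
D \ge \frac{1}{2\beta}\int\|\widehat T_{\OT}-T_0\|_2^2\,d\mu.
\end{align*}

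\emph{Upper bound.} Using the pushforwards, rewrite
\begin{align*}
D = \int\psi_0\,d\widehat\nu - \int\psi_0\,d\nu - \int\langle x,\widehat T_{\OT}(x)\rangle d\mu + \int\langle x,T_0(x)\rangle d\mu.
\end{align*}
Let $(Y,\widehat Y)$ be an optimal $W_2$ coupling of $\nu$ and $\widehat\nu$, and set $X:=\nabla\psi_0(Y)$, so that $X\sim\mu$ and $Y = T_0(X)$. Then $(X,\widehat Y)$ is a coupling of $\mu$ and $\widehat\nu$. Because the Brenier map maximizes correlation among all couplings of $\mu$ and $\widehat\nu$,
\begin{align*}
\int\langle x,\widehat T_{\OT}(x)\rangle d\mu \ge \mathbb E\langle \nabla\psi_0(Y),\widehat Y\rangle.
\end{align*}
We also have $\int\langle x,T_0\rangle d\mu = \mathbb E\langle\nabla\psi_0(Y),Y\rangle$. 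Therefore
\begin{align*}
D \le \mathbb E\big[\psi_0(\widehat Y)-\psi_0(Y)-\langle\nabla\psi_0(Y),\widehat Y - Y\rangle\big] \le \frac{1}{2\alpha}\mathbb E\|\widehat Y-Y\|_2^2 = \frac{1}{2\alpha}W_2^2(\widehat\nu,\nu),
\end{align*}
where the last inequality is the $1/\alpha$-smoothness of $\psi_0$.

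Combining the two bounds gives $\int\|\widehat T_{\OT}-T_0\|_2^2\,d\mu \le \frac{\beta}{\alpha}W_2^2(\widehat\nu,\nu)$, which is the claim.

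The main obstacle is the upper bound. The key step is choosing the right (suboptimal) coupling of $\mu$ and $\widehat\nu$: glue the $W_2$-optimal coupling of $(\nu,\widehat\nu)$ with the inverse map $\nabla\psi_0$. This converts the cross-correlation terms into a Bregman divergence of $\psi_0$ along that coupling.

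Some routine integrability checks are needed so that all integrals are finite. These follow from the finite second moments together with the quadratic growth of $\psi_0$ implied by its $1/\alpha$-smoothness.
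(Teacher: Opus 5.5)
Your proof is correct. The paper does not prove this lemma itself; it imports it from Theorem 3 of Balakrishnan and Manole, so there is no in-paper argument to compare against. Your argument is, as far as I can tell, the standard semi-dual sandwich from that line of work. Your $D$ equals $\int\varphi_0\,d\mu+\int\varphi_0^*\,d\widehat\nu-\int\langle x,\widehat T_{\OT}(x)\rangle\,d\mu(x)$, the suboptimality of $\varphi_0$ in the semi-dual problem for $(\mu,\widehat\nu)$. You lower bound it via the $\beta$-smoothness of $\varphi_0$, and upper bound it via $\alpha$-strong convexity after gluing an optimal $(\nu,\widehat\nu)$ coupling through $\nabla\varphi_0^*$. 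One integrability point is worth stating explicitly, since it licenses the correlation-maximization step: $\mu=(\nabla\varphi_0^*)_{\#}\nu$ has finite second moment whenever $\nu$ does, and otherwise $W_2(\widehat\nu,\nu)=\infty$ and the claim is trivial.
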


\noindent When $\mu$ is absolutely continuous every distribution $\widehat{\nu}$ corresponds to a transport map $\widehat{T}_{\OT}$ which pushes $\mu$ onto $\widehat{\nu}$. Consequently, this extraordinary stability shows that, under the above regularity assumptions, when $\alpha, \beta$ are fixed estimating the OT map is no harder than estimating the target distribution in Wasserstein distance.

\section{Main Results}
With the necessary background in place we now proceed to discuss our main results. In Section~\ref{sec:structural} we discuss some basic properties of the loss $\lvalid$ which show that it is often amenable to direct analysis. In Section~\ref{sec:implications} we discuss the implications of this basic result, providing both new lower bounds as well as enabling a direct comparison with existing results on OT map estimation. 
In Section~\ref{sec:separation} we prove a pair of minimax lower bounds, which highlight the possibility of separations between the OT and valid map minimax risks. We focus throughout on the multivariate case when $d \geq 2$ and defer treatment of the special one-dimensional case to Appendix~\ref{sec:oned}. We defer detailed proofs of various
claims to Appendix~\ref{app:proofs}.

\subsection{Structural Results}
\label{sec:structural}
We first give a simple structural result which shows that the projection risk defined above is closely related to the Wasserstein distance between the distribution induced by the estimator and the target distribution.

\begin{lemma}[Projection risk and Wasserstein distance]
\label{lem:main}
Let $\mu,\nu \in \mathcal{P}_p(\mathbb{R}^d)$, and let $\widehat{T}:\mathbb{R}^d \to \mathbb{R}^d$ be a measurable map such that
\[
\widehat{\nu}:=\widehat{T}_{\#}\mu \in \mathcal{P}_p(\mathbb{R}^d).
\]
Then
\begin{align}
\label{eq:valid-risk-lower}
\lvalidp(\widehat{T}) := \inf_{T_{\#}\mu=\nu}
\int \|\widehat{T}(x)-T(x)\|_2^p\,d\mu(x)
\geq
W_p^p(\widehat{\nu},\nu).
\end{align}
Moreover, suppose that there exists an OT map $S:\mathbb{R}^d \to \mathbb{R}^d$ transporting $\widehat{\nu}$ to $\nu$, i.e.
\[
S_{\#}\widehat{\nu}=\nu
\qquad\text{and}\qquad
\int \|z-S(z)\|_2^p\,d\widehat{\nu}(z)
=
W_p^p(\widehat{\nu},\nu).
\]
Then equality holds in \eqref{eq:valid-risk-lower}. 
\end{lemma}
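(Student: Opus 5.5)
For the lower bound \eqref{eq:valid-risk-lower}, I will fix an arbitrary valid map $T$ with $T_{\#}\mu=\nu$ and turn it into a coupling of $\widehat{\nu}$ and $\nu$. The natural choice is the joint law of $(\widehat{T}(X),T(X))$ for $X\sim\mu$, that is, $\gamma:=(\widehat{T},T)_{\#}\mu$. Its first marginal is $\widehat{T}_{\#}\mu=\widehat{\nu}$ and its second marginal is $T_{\#}\mu=\nu$, so $\gamma\in\Pi(\widehat{\nu},\nu)$. By the change-of-variables formula,
\begin{align*}
\int \|\widehat{T}(x)-T(x)\|_2^p\,d\mu(x)=\int \|z-y\|_2^p\,d\gamma(z,y)\geq W_p^p(\widehat{\nu},\nu).
\end{align*}
Taking the infimum over all valid $T$ gives \eqref{eq:valid-risk-lower}. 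If no valid $T$ exists, the infimum is $+\infty$ by the convention adopted earlier, and the inequality holds trivially.

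For the equality statement, I will exhibit a valid map that attains the bound, namely the composition $T^\star:=S\circ\widehat{T}$. This map is Borel measurable as a composition of measurable maps. It is valid because $T^\star_{\#}\mu=S_{\#}(\widehat{T}_{\#}\mu)=S_{\#}\widehat{\nu}=\nu$. Its cost is computed by change of variables:
\begin{align*}
\int \|\widehat{T}(x)-S(\widehat{T}(x))\|_2^p\,d\mu(x)=\int \|z-S(z)\|_2^p\,d\widehat{\nu}(z)=W_p^p(\widehat{\nu},\nu).
\end{align*}
So the infimum is at most $W_p^p(\widehat{\nu},\nu)$, and combined with the first part, equality holds (and the infimum is attained).

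No step here is a real obstacle. The only care needed is with measurability: $(\widehat{T},T)$ must be Borel so that the pushforward $\gamma$ is a well-defined Borel measure, and $S$ must be Borel measurable (if $S$ is only defined $\widehat{\nu}$-a.e., I will modify it on a $\widehat{\nu}$-null set). The finite $p$th moments guarantee that $W_p(\widehat{\nu},\nu)$ is finite.
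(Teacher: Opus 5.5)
Your proposal is correct and follows essentially the same argument as the paper: the lower bound comes from the coupling given by the joint law of $(\widehat{T}(X),T(X))$, and equality comes from the valid map $S\circ\widehat{T}$, whose cost reduces to $W_p^p(\widehat{\nu},\nu)$ by change of variables. Your additional remarks on measurability and on the empty-infimum convention are appropriate refinements.
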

\noindent For $p>1$, there is an OT map $S:\mathbb{R}^d \to \mathbb{R}^d$ transporting $\widehat{\nu}$ to $\nu$ whenever $\widehat{\nu}$ is absolutely continuous with respect to the Lebesgue measure. 
\noindent Consequently, at least when the estimator induces an absolutely continuous distribution, the valid map loss $\lvalid$ is exactly the Wasserstein error of the induced distribution $\widehat{\nu}=\widehat{T}_{\#}\mu$. 

\subsection{Implications}
\label{sec:implications}
The structural result above has a useful consequence: estimating a valid transport map contains, as a subproblem, estimating the target distribution in Wasserstein distance. This observation is independent of optimality, convexity, smoothness, or the particular algorithm used to construct the map. It therefore gives immediate lower bounds for procedures such as rectified flow, flow matching, normalizing flows, and stochastic transport variants, whenever their output is evaluated as a map or plan from the source distribution to the target distribution.

Let $Y_1,\ldots,Y_n\sim \nu$ be i.i.d., and let $\targetclass\subseteq \mathcal P_2(\mathbb R^d)$ be a class of possible target distributions. Define the minimax squared-Wasserstein distribution-estimation risk
\begin{align*}
\wtwominimaxrisk(\targetclass):=
\inf_{\widehat\nu}
\sup_{\nu\in\targetclass}
\mathbb E_\nu W_2^2(\widehat\nu,\nu),
\end{align*}
where the infimum is over all measurable distribution estimators based on $Y_1,\ldots,Y_n$. The following result is an immediate consequence of Lemma~\ref{lem:main}.

\begin{lemma}[Distribution estimation is a subproblem]
\label{lem:wasserstein-lower-bound-valid map}
For any known source distribution $\mu\in\mathcal P_2(\mathbb R^d)$ and any target class $\targetclass\subseteq\mathcal P_2(\mathbb R^d)$,
\begin{align}
\label{eq:valid map-lower-bound-by-w2}
\validminimaxrisk(\targetclass)
\geq
\wtwominimaxrisk(\targetclass).
\end{align}
\end{lemma}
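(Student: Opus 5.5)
The plan is a reduction: every map estimator $\widehat{T}$ induces a distribution estimator $\widehat{\nu} := \widehat{T}_{\#}\mu$, and Lemma~\ref{lem:main} with $p=2$ says the valid-map loss of $\widehat{T}$ dominates the squared Wasserstein loss of $\widehat{\nu}$. Fix an arbitrary estimator $\widehat{T} = \widehat{T}(\,\cdot\,; Y_1,\ldots,Y_n)$, that is, a measurable function of the data and the point $x$. Since $\mu$ is known and fixed, $\widehat{\nu}$ depends only on $Y_1,\ldots,Y_n$, so it is an admissible competitor in the infimum defining $\wtwominimaxrisk(\targetclass)$.

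I will first dispose of the degenerate case. If, with positive probability under some $\nu\in\targetclass$, $\widehat{\nu}\notin\mathcal P_2(\mathbb R^d)$, then $\int\|\widehat{T}\|_2^2\,d\mu=\infty$. Since $\mu\in\mathcal P_2$ and every valid $T$ has $\int\|T\|_2^2\,d\mu=\int\|y\|_2^2\,d\nu<\infty$, the triangle inequality in $L^2(\mu)$ gives $\lvalid(\widehat{T})=\infty$ on that event. Hence $\sup_\nu\rvalid(\widehat{T})=\infty$, and the inequality is trivial for this $\widehat{T}$. Otherwise $\widehat{\nu}\in\mathcal P_2$ almost surely. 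For each sample realization, \eqref{eq:valid-risk-lower} gives $\lvalid(\widehat{T})\ge W_2^2(\widehat{\nu},\nu)$. Taking expectations under $\nu$ yields $\rvalid(\widehat{T})\ge \mathbb E_\nu W_2^2(\widehat\nu,\nu)$. Taking the supremum over $\nu\in\targetclass$ then gives
\[
\sup_{\nu\in\targetclass}\rvalid(\widehat{T})\ \ge\ \sup_{\nu\in\targetclass}\mathbb E_\nu W_2^2(\widehat\nu,\nu)\ \ge\ \wtwominimaxrisk(\targetclass).
\]
Finally, taking the infimum over $\widehat{T}$ gives \eqref{eq:valid map-lower-bound-by-w2}.

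The only step needing care is measurability: the map $(Y_1,\ldots,Y_n)\mapsto \widehat{T}_{\#}\mu$ should be a measurable distribution estimator, and $\lvalid(\widehat{T})$ should be measurable so that its expectation makes sense. I would handle both as follows. If $\widehat{T}$ is jointly measurable in the data and $x$, then for every bounded continuous $f$ the quantity $\int f(\widehat{T}(x))\,d\mu(x)$ is measurable in the data by Fubini, which gives measurability of $\widehat{\nu}$ for the weak topology. For the loss, if it fails to be measurable one can interpret the expectation as an outer expectation; the pointwise inequality is preserved under that interpretation. I expect no substantive obstacle beyond these conventions.
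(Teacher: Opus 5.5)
Your proposal is correct and follows essentially the same route as the paper: push $\widehat T$ forward to get the distribution estimator $\widehat\nu=\widehat T_{\#}\mu$, apply Lemma~\ref{lem:main} pointwise, then take the expectation, the supremum over $\nu$, and the infimum over $\widehat T$. Your separate treatment of the case $\widehat\nu\notin\mathcal P_2$ and of measurability spells out details the paper leaves implicit, and both arguments are sound.
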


\paragraph{Consequences for rectified flow and related map estimators.}
Rectified flow and related estimators~\cite{lipman2023flowmatching,liu2022rectified,albergo2023stochastic}
output a data-dependent velocity field $\widehat v_t$ and then define a terminal map $\widehat T_{\rm RF}$ by solving
\begin{align*}
\frac{d}{dt}\widehat X_t = \widehat v_t(\widehat X_t),
\qquad
\widehat X_0=x,
\qquad
\widehat T_{\rm RF}(x):=\widehat X_1,
\end{align*}
whenever this ODE is well-posed. Once the terminal map is constructed, it is simply a map estimator in the sense above. Therefore Lemma~\ref{lem:wasserstein-lower-bound-valid map} gives
\begin{align}
\label{eq:rf-lower-bound-general}
\inf_{\widehat v}
\sup_{\nu\in\targetclass}
\mathbb E_\nu
\left[
\inf_{T_\#\mu=\nu}
\int \|\widehat T_{\rm RF}(x)-T(x)\|_2^2\,d\mu(x)
\right]
\geq
\wtwominimaxrisk(\targetclass),
\end{align}
where the infimum is over any class of data-dependent vector-field estimators whose terminal maps are measurable. This lower bound is algorithm-independent: it does not depend on how the velocity field is parameterized, optimized, or regularized. The same statement applies to flow matching, normalizing flows, triangular flows, and any other deterministic transport estimator after replacing $\widehat T_{\rm RF}$ by its terminal map. For stochastic diffusion-type estimators, Lemma~\ref{lem:kernel-risk-equals-w2-risk} gives the corresponding result for the stochastic projection loss.

\subsubsection*{A concrete smooth-density lower bound}
For illustration we now instantiate the preceding reduction in a canonical non-parametric setting using known minimax lower bounds for Wasserstein distribution estimation. Let $\Omega=[0,1]^d$, and for $s\geq0$, $L>0$, and $m>0$, define
\begin{align*}
\targetclass^s_{p',q}(L;m)
:=
\left\{
 f\in B^s_{p',q}(L):
 f\geq m,\ \int_\Omega f(x)\,dx=1
\right\},
\end{align*}
where $B^s_{p',q}(L)$ denotes the usual Besov ball on $\Omega$. Abusing notation slightly, we use the same notation for the corresponding collection of measures.
This is the smooth-density model studied by \citet{weed2019a}, and their results, together with Lemma~\ref{lem:wasserstein-lower-bound-valid map} immediately implies the following corollary: 
\begin{corollary}[Smooth-density lower bound for valid transport estimation]
\label{cor:smooth-density-valid map-lower-bound}
Fix $d\geq1$, $s\geq0$, $2\leq p'<\infty$, and $1\leq q\leq\infty$, and suppose the above class is nonempty. For every known source distribution $\mu\in\mathcal P_2(\mathbb R^d)$, 
we have 
\begin{align}
\label{eq:smooth-density-valid map-lower-bound}
\validminimaxrisk\bigl(\targetclass^s_{p',q}(L;m)\bigr)
\gtrsim
\begin{cases}
 n^{-\frac{2(1+s)}{d+2s}}, & d\geq2,\\[.5em]
 n^{-1}, & d=1.
\end{cases}
\end{align}
\end{corollary}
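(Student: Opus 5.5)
The plan is to combine Lemma~\ref{lem:wasserstein-lower-bound-valid map} with the known minimax lower bound of \citet{weed2019a} for Wasserstein estimation of Besov-smooth densities bounded below. By Lemma~\ref{lem:wasserstein-lower-bound-valid map},
\[
\validminimaxrisk\bigl(\targetclass^s_{p',q}(L;m)\bigr)\geq \wtwominimaxrisk\bigl(\targetclass^s_{p',q}(L;m)\bigr),
\]
so it suffices to show that $\wtwominimaxrisk(\targetclass^s_{p',q}(L;m))$ is bounded below by the displayed rates. First, I will check that Lemma~\ref{lem:wasserstein-lower-bound-valid map} really applies. The class lies in $\mathcal P_2(\mathbb R^d)$, since its members are supported on $[0,1]^d$. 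The lemma holds for any known $\mu\in\mathcal P_2(\mathbb R^d)$. The reduction is that any map estimator $\widehat T$ induces the distribution estimator $\widehat\nu=\widehat T_{\#}\mu$, and Lemma~\ref{lem:main} with $p=2$ gives $\lvalid(\widehat T)\ge W_2^2(\widehat\nu,\nu)$ pointwise in the sample. If $\widehat\nu$ has infinite second moment, the inequality still holds, with $W_2=\infty$ or via the convention on infima. Taking expectations, then the supremum over $\nu$, then the infimum over $\widehat T$ gives the reduction. The infimum over $\widehat T$ is at least the infimum over all distribution estimators.

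Second, I will import the lower bound of \citet{weed2019a}. Their result is stated for $W_p$ with $p$ playing the role of the Wasserstein exponent. Specialized to $W_1$ on $[0,1]^d$ with $d\ge 2$, it gives
\[
\inf_{\widehat\nu}\sup_{\nu}\mathbb E\, W_1(\widehat\nu,\nu)\gtrsim n^{-(1+s)/(d+2s)},
\]
and in $d=1$ it gives $n^{-1/2}$. Since $W_2\ge W_1$ and Jensen gives $\mathbb E W_2^2\ge (\mathbb E W_1)^2$, squaring yields $n^{-2(1+s)/(d+2s)}$ for $d\geq 2$ and $n^{-1}$ for $d=1$. Alternatively, their $W_2$ statement can be used directly where available. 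The conditions $p'\geq2$ and $f\ge m$ are assumed to match their hypotheses, with nonemptiness of the class assumed.

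The main obstacle is bookkeeping: matching the exact form of the Weed--Berthet theorem, including their lower bound construction, which uses perturbations of the uniform density that stay in the Besov ball and above $m$, and the exponent conventions. If their stated lower bound holds only for certain parameter ranges, I will instead rederive it directly via Assouad's lemma. The hypotheses would be $\nu_\tau = 1+\sum_j \tau_j h^{s+d/p'}\cdots$-style bump perturbations at scale $h\asymp n^{-1/(d+2s)}$, and the argument would rely on the fact that each flipped bump costs order $h^{s+1}\cdot h^{d}$ in $W_1$ via a dual Lipschitz test function. In this case I will rely on the citation.
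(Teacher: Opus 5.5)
Your proposal is correct and follows the paper's own argument: the paper likewise gets the corollary immediately by combining the reduction $\validminimaxrisk(\targetclass)\geq\wtwominimaxrisk(\targetclass)$ with the Niles-Weed--Berthet Wasserstein lower bound for this Besov class. Going through $W_1$ via $\mathbb E W_2^2\geq(\mathbb E W_2)^2\geq(\mathbb E W_1)^2$ is a harmless variant, and it makes explicit the squaring step the paper leaves implicit, since their bound controls $\mathbb E W_p$ rather than $\mathbb E W_p^2$.
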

\noindent We note that the same lower bound holds for every restricted subclass of deterministic estimators.
Under related but stronger smoothness assumptions, the recent work of \citet{mena2025statistical} (see their Theorem 8) showed that the rectified flow map can be estimated 
at the slower rate of $n^{-2s/(2s + d - 1)}$. 
Corollary~\ref{cor:smooth-density-valid map-lower-bound} 
gives a complementary lower bound for any estimator whose output is evaluated through the valid map loss. Since the loss to any specified population transport map, including the rectified flow map when it is well-defined, dominates the valid map loss, the same lower bound applies to such target-specific map estimation problems as well.
This result highlights that in the standard non-parametric setups that have been studied in the OT map estimation literature, the curse of dimensionality is unavoidable, and targeting any (even sub-optimal) transport map does not alleviate these challenges.

\subsubsection*{Relation to stability assumptions for OT map estimation}

Past work summarized in Lemma~\ref{lem:ot-stability} has shown that, under smoothness and strong convexity of the Brenier potential, estimating the OT map can be reduced to estimating the underlying distributions in Wasserstein distance. The following proposition summarizes the implications of Lemmas~\ref{lem:ot-stability} and~\ref{lem:main}.

\begin{proposition}[Stability sandwiches the three risks]
\label{prop:stability-sandwich}
Suppose that $\mu$ is absolutely continuous and that, for every $\nu\in\targetclass$, the quadratic OT map $T_\nu$ from $\mu$ to $\nu$ exists. Assume further that there is a constant $C_{\rm stab}<\infty$ such that, for every distribution estimator $\widehat\nu$ under consideration, the plug-in OT map $T_{\widehat\nu}$ satisfies
\begin{align}
\label{eq:abstract-stability-assumption}
\int \|T_{\widehat\nu}(x)-T_\nu(x)\|_2^2\,d\mu(x)
\leq
C_{\rm stab} W_2^2(\widehat\nu,\nu)
\end{align}
for all $\nu\in\targetclass$. Then
\begin{align}
\label{eq:stability-sandwich}
\wtwominimaxrisk(\targetclass)
\leq
\validminimaxrisk(\targetclass)
\leq
\otminimaxrisk(\targetclass)
\leq
C_{\rm stab}\wtwominimaxrisk(\targetclass).
\end{align}
\end{proposition}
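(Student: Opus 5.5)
The chain in \eqref{eq:stability-sandwich} has three inequalities, and I plan to prove each one separately. Every step is a reduction between estimators, and the stability hypothesis \eqref{eq:abstract-stability-assumption} is used only in the last one.

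\emph{First inequality.} This is exactly Lemma~\ref{lem:wasserstein-lower-bound-valid map}. For completeness, the argument runs as follows. Take any map estimator $\widehat T$ and define the distribution estimator $\widehat\nu := \widehat T_{\#}\mu$. Since $\mu$ is known, $\widehat\nu$ is a measurable function of $Y_1,\ldots,Y_n$. Lemma~\ref{lem:main} with $p=2$ gives the pointwise bound $\lvalid(\widehat T)\geq W_2^2(\widehat\nu,\nu)$. Taking expectations, then the supremum over $\nu\in\targetclass$, then the infimum over $\widehat T$, yields $\validminimaxrisk(\targetclass)\geq\wtwominimaxrisk(\targetclass)$.

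\emph{Second inequality.} For every $\widehat T$ and every $\nu$ we have $\lot(\widehat T)\geq\lvalid(\widehat T)$, because $T_\nu$ is one admissible map in the infimum defining $\lvalid$. It follows that $\rot\geq\rvalid$ pointwise, and this ordering is preserved after taking the supremum over $\nu$ and the infimum over estimators.

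\emph{Third inequality.} Fix $\varepsilon>0$ and choose a distribution estimator $\widehat\nu$ from the class under consideration such that
\begin{align*}
\sup_{\nu\in\targetclass}\mathbb E_\nu W_2^2(\widehat\nu,\nu)\leq\wtwominimaxrisk(\targetclass)+\varepsilon.
\end{align*}
I would then define the plug-in map estimator $\widehat T := T_{\widehat\nu}$. This map exists because $\mu$ is absolutely continuous, so Brenier's theorem applies whenever $\widehat\nu$ has finite second moment. By \eqref{eq:abstract-stability-assumption}, $\lot(\widehat T)\leq C_{\rm stab}W_2^2(\widehat\nu,\nu)$ holds pointwise. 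Taking expectations and the supremum over $\nu$ gives $\otminimaxrisk\leq C_{\rm stab}(\wtwominimaxrisk+\varepsilon)$, and letting $\varepsilon\to0$ finishes the proof.

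\emph{Main obstacle.} The third step is the only delicate one, and the difficulty is bookkeeping rather than analysis. The stability assumption is stated only for the distribution estimators ``under consideration''. So I must interpret $\wtwominimaxrisk$ as an infimum over that same class, or, without loss of generality, over estimators with finite second moment, so that the near-minimizer $\widehat\nu$ is one to which \eqref{eq:abstract-stability-assumption} applies. I must also check that $T_{\widehat\nu}$ is jointly measurable in the data, so that it is a legitimate estimator. I would handle this by appealing to the standard measurable-selection properties of the Brenier map, or by simply taking measurability as part of the hypothesis on the estimators considered.
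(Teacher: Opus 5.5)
Your proposal is correct and matches the paper's proof step for step. The paper likewise uses Lemma~\ref{lem:wasserstein-lower-bound-valid map} for the first inequality, the pointwise domination $\lot(\widehat T)\geq\lvalid(\widehat T)$ for the second, and the plug-in estimator $T_{\widehat\nu}$ together with the stability bound for the third. Your $\varepsilon$-near-minimizer argument and your remarks on the estimator class and measurability simply make precise the paper's ``take the infimum over $\widehat\nu$'' step.
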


\noindent This proposition shows that in any model where a stability inequality of the form \eqref{eq:abstract-stability-assumption} holds and the Wasserstein distribution-estimation rate is known, valid map estimation, OT-map estimation, and Wasserstein distribution estimation all have the same minimax rate up to the stability constant. This result summarizes the formal sense in which targeting a sub-optimal transport map cannot yield any significant statistical benefit over targeting the OT map in stable regimes. 

\subsection{Separations between Valid Transport and OT}
\label{sec:separation}

The preceding results show that, under sufficiently strong stability assumptions, estimating a valid transport map is no easier than estimating the optimal transport map. We now show that this conclusion can fail once stability is weakened. The key insight is not that valid transport maps avoid the need to estimate the target distribution: Lemma~\ref{lem:main} shows that they do not. Rather, the key insight is that the OT map can be unstable to small perturbations of the target distribution, and in these cases the OT map risk can be substantially larger than the valid map risk.

\subsubsection{Polynomial Separation with a Benign Source}
We follow a classical scheme in proving minimax lower bounds of reducing from estimation to testing.
The following two-point reduction forms the basis for our minimax risk comparisons.
\begin{lemma}[Two-point reduction]
\label{lem:twopoint-separation}
Let $\targetclass=\{\nu_0,\nu_1\}$, let $T_i:=T_{\nu_i}$ be the corresponding OT maps, and set
\[
\Delta_{\OT}^2:=\|T_0-T_1\|_{L^2(\mu)}^2.
\]
Let
\[
e_n:=\inf_\psi \max_{i\in\{0,1\}}
\nu_i^{\otimes n}(\psi\neq i)
\]
be the minimax testing error for distinguishing $\nu_0$ from $\nu_1$. Then
\begin{align}
\label{eq:twopoint-ot-lower}
\otminimaxrisk(\targetclass)
\geq
\frac{1}{4}\Delta_{\OT}^2 e_n.
\end{align}
Moreover, for any maps $S_0,S_1$ satisfying $S_{i\#}\mu=\nu_i$,
\begin{align}
\label{eq:twopoint-valid-upper}
\validminimaxrisk(\targetclass)
\leq
\|S_0-S_1\|_{L^2(\mu)}^2.
\end{align}
In particular,
\[
\validminimaxrisk(\targetclass)
\leq
\inf_{S_{0\#}\mu=\nu_0,\,S_{1\#}\mu=\nu_1}
\|S_0-S_1\|_{L^2(\mu)}^2.
\]
\end{lemma}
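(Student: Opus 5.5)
The lower bound \eqref{eq:twopoint-ot-lower} follows from the usual reduction of estimation to testing. Fix any estimator $\widehat T$ and define the test $\psi:=\arg\min_{i\in\{0,1\}}\|\widehat T-T_i\|_{L^2(\mu)}$, breaking ties arbitrarily. Suppose $\psi\neq i$. Then $\|\widehat T-T_{1-i}\|\le\|\widehat T-T_i\|$, and the triangle inequality gives
\begin{align*}
\Delta_{\OT}=\|T_0-T_1\|_{L^2(\mu)}\le \|\widehat T-T_i\|_{L^2(\mu)}+\|\widehat T-T_{1-i}\|_{L^2(\mu)}\le 2\|\widehat T-T_i\|_{L^2(\mu)} .
\end{align*}
Hence $\|\widehat T-T_i\|^2_{L^2(\mu)}\ge \Delta_{\OT}^2/4$ on the event $\{\psi\neq i\}$. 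Taking expectations under $\nu_i^{\otimes n}$ gives $\rot(\widehat T)\ge \frac14\Delta_{\OT}^2\,\nu_i^{\otimes n}(\psi\neq i)$ when the truth is $\nu_i$. Therefore
\begin{align*}
\max_i \mathbb{E}_{\nu_i}\lot(\widehat T)\ge \tfrac14\Delta_{\OT}^2\max_i\nu_i^{\otimes n}(\psi\neq i)\ge\tfrac14\Delta_{\OT}^2 e_n .
\end{align*}
Taking the infimum over $\widehat T$ completes this part. The only care needed is measurability of $\psi$. The distances $\|\widehat T-T_i\|_{L^2(\mu)}$ are measurable functions of the sample, since $\widehat T$ is jointly measurable in $(x,Y_{1:n})$ and Fubini applies, so $\psi$ is a legitimate test.

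For the upper bound \eqref{eq:twopoint-valid-upper}, I would avoid testing entirely and use a data-independent estimator. Given $S_0,S_1$ with $S_{i\#}\mu=\nu_i$, take $\widehat T:=S_0$, which ignores the data. If the truth is $\nu_0$, then $S_0$ is itself a valid map and $\lvalid(S_0)=0$. If the truth is $\nu_1$, then $S_1$ is admissible in the infimum defining $\lvalid$, so $\lvalid(S_0)\le\|S_0-S_1\|_{L^2(\mu)}^2$. Both bounds are deterministic, so the risks obey the same bounds. Taking the supremum over $\targetclass$ and then the infimum over estimators gives \eqref{eq:twopoint-valid-upper}.

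Since \eqref{eq:twopoint-valid-upper} holds for every admissible pair $(S_0,S_1)$, taking the infimum over such pairs gives the final display. I expect no real obstacle. The only subtle point is to make sure the triangle-inequality argument is stated in $L^2(\mu)$ norms, not squared norms, which is where the factor $1/4$ comes from.
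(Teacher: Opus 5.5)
Your proposal is correct and follows the paper's proof essentially line for line. For the lower bound you use the nearest-neighbor test in $L^2(\mu)$ together with the unsquared triangle inequality, which produces the factor $1/4$, and for the valid-map upper bound you use the data-independent estimator $\widehat T=S_0$; your remark on the measurability of $\psi$ is a harmless refinement that the paper leaves implicit.
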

\noindent To show a risk separation we need three ingredients: the targets must be statistically hard to distinguish, their OT maps must be far apart, and the two target distributions must admit valid transports from $\mu$ that are close as maps.

Our first construction is inspired by a well-known construction showing the H\"{o}lder irregularity of the OT map (see, for instance, Section 2.4 in the paper~\cite{letrouit2025lectures}). However, some careful modifications are needed to ensure that the resulting construction yields a valid minimax lower bound. 
Let $D\subset\mathbb R^2$ be the unit disk and let $\mu$ be the uniform distribution on $D$. Fix constants $a > 1$ and, for $\theta\in[0,\pi/4]$, write
\[
u_\theta=(\cos\theta,\sin\theta),
\qquad
T_\theta(x):= x+a\,\sgn(x\cdot u_\theta)u_\theta,
\qquad
\nu_\theta:=(T_\theta)_\#\mu.
\]
The map $T_\theta$ is the gradient, $\mu$-a.e., of the convex potential
\[
\varphi_\theta(x):=\frac{1}{2}\|x\|_2^2+a|x\cdot u_\theta|.
\]
Consequently $T_\theta$ is the OT map from $\mu$ to $\nu_\theta$. Geometrically, $T_\theta$ sends the two half-disks cut out by the line $x\cdot u_\theta=0$ to two separated half-disks of radius $1$, centered at $+ a u_\theta$ and $- a u_\theta$. This construction is illustrated in Figure~\ref{fig:rotating-halves}. The following lemma analyzes this construction: 
\begin{figure}[t]
\centering
\begin{tikzpicture}[scale=1,>=stealth]
  \def\rhalf{1.12}
  \def\sep{1.10}
  \def\DrawTargetHalves{%
    \path[fill=gray!25,draw=black,thick]
      (\sep,-\rhalf) arc[start angle=-90,end angle=90,radius=\rhalf] -- cycle;
    \path[fill=gray!25,draw=black,thick]
      (-\sep,\rhalf) arc[start angle=90,end angle=270,radius=\rhalf] -- cycle;
  }

  \begin{scope}[shift={(0,0)}]
    \node at (0,1.60) {source $\mu$};
    \fill[gray!15] (90:\rhalf) arc[start angle=90,end angle=108,radius=\rhalf] -- (0,0) -- cycle;
    \fill[gray!15] (270:\rhalf) arc[start angle=270,end angle=288,radius=\rhalf] -- (0,0) -- cycle;
    \draw[thick] (0,0) circle (\rhalf);
    \draw[dashed] (0,-1.20) -- (0,1.20);
    \draw[dashed] (108:1.20) -- (288:1.20);
    \node[align=center,font=\small] at (0,-1.48) {wedge of mass\\$\asymp\theta$};
  \end{scope}

  \draw[->,thick,shorten <=2pt,shorten >=2pt]
    (1.22,0.88) .. controls (1.60,2.10) and (2.50,2.10) ..
    node[pos=.50,above] {$T_0$} (3.10,1.25);

  \draw[->,thick,shorten <=2pt,shorten >=2pt]
    (1.22,-0.88) .. controls (3.00,-2.75) and (7.70,-2.75) ..
    node[pos=.52,below] {$T_\theta$} (8.45,-1.60);

  \begin{scope}[shift={(4.90,0)}]
    \node at (0,1.60) {$\nu_0$};
    \DrawTargetHalves
    \node[font=\small] at (0,-1.48) {two separated half-disks};
  \end{scope}

  \node at (10.70,1.70) {$\nu_\theta=R_{\theta\#}\nu_0$};
  \begin{scope}[shift={(10.70,0)},rotate=18]
    \DrawTargetHalves
  \end{scope}
\end{tikzpicture}
\caption{The rotating two-component construction. The target distributions differ by a small rotation, so their squared Wasserstein distance is $\lesssim \theta^2$. However, the sign boundary in the source rotates, and a wedge of source mass of order $\theta$ is sent to the opposite component. This ensures that the squared $L^2(\mu)$ distance between the OT maps is of order $\theta$.}
\label{fig:rotating-halves}
\end{figure}
\begin{lemma}
\label{lem:rotating-geometry}
There are constants $0<c<C<\infty$, depending only on $a$, such that for all sufficiently small $\theta>0$,
\begin{align}
\label{eq:rotating-map-gap}
\|T_\theta-T_0\|_{L^2(\mu)}^2 \asymp \theta,
\end{align}
while
\begin{align}
\label{eq:rotating-w2-gap}
W_2^2(\nu_\theta,\nu_0)
\lesssim
\theta^2,
\end{align}
and
\begin{align}
\label{eq:rotating-hellinger}
H^2(\nu_\theta,\nu_0)
\lesssim
\theta.
\end{align}
\end{lemma}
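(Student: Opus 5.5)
Each of the three bounds is a direct computation, so I will treat them separately.

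\emph{Map gap.} Note that $T_\theta(x)-T_0(x)=a\bigl(\sgn(x\cdot u_\theta)u_\theta-\sgn(x_1)u_0\bigr)$. I split $D$ into two parts. The first is the symmetric double wedge $W_\theta$ where the signs of $x\cdot u_\theta$ and $x\cdot u_0$ disagree; it has $\mu$-mass $2\theta/(2\pi)=\theta/\pi$. The second is its complement, where the signs agree.
\begin{itemize}
\item On $W_\theta$ the difference is $\pm a(u_\theta+u_0)$, whose squared norm is $a^2(2+2\cos\theta)\in[2a^2,4a^2]$.
\item On the complement the difference is $\pm a(u_\theta-u_0)$, whose squared norm is $a^2(2-2\cos\theta)\le a^2\theta^2$.
\end{itemize}
Integrating gives $\|T_\theta-T_0\|^2_{L^2(\mu)}=\tfrac{\theta}{\pi}a^2(2+2\cos\theta)+O(\theta^2)$. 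This is $\asymp\theta$ for small $\theta$, with constants depending only on $a$.

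\emph{Wasserstein gap.} The key observation is that $\nu_\theta=(R_\theta)_\#\nu_0$, where $R_\theta$ is rotation by $\theta$. To see this, use the rotation invariance of $\mu$ and the identity $T_\theta=R_\theta\circ T_0\circ R_\theta^{-1}$, which holds because $R_\theta u_0=u_\theta$ and $R_\theta^{-1}x\cdot u_0=x\cdot u_\theta$. Then coupling $y$ with $R_\theta y$ gives
\[
W_2^2(\nu_\theta,\nu_0)\le\int\|R_\theta y-y\|^2\,d\nu_0(y)\le\theta^2\,(1+a)^2,
\]
since $\nu_0$ is supported in the ball of radius $1+a$.

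\emph{Hellinger bound.} This is the step I expect to be the main obstacle. Both measures have densities $1/\pi$ on their supports, namely two unit half-disks each. The supports of $\nu_0$ and $\nu_\theta$ are
\[
A_0=(D_+ + a u_0)\cup(D_- - a u_0),\qquad A_\theta=R_\theta A_0 .
\]
For uniform densities with equal heights, $\rho=|A_0\cap A_\theta|/\pi$, and therefore
\[
H^2=\frac{|A_0\setminus A_\theta|}{\pi}.
\]
It remains to show that the symmetric difference of a fixed compact set with piecewise smooth boundary and its rotation by $\theta$ has area $O(\theta)$. I would argue this by noting that $A_0\triangle A_\theta$ lies in the $(1+a)\theta$-neighborhood of $\partial A_0$, because each point moves at most $(1+a)\theta$. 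The boundary $\partial A_0$ has finite length $2(\pi+2)$, so this tubular neighborhood has area $O(\theta)$; a Steiner/Minkowski-content bound suffices since the boundary consists of finitely many arcs and segments.

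Alternatively, one can compute directly. Rotation of the flat edge of each half-disk about a point at distance $a$ from the origin sweeps an area of order $\theta$. The curved arcs contribute similarly, because the rotation is not about their centers, each center moving by about $a\theta$. Either way $H^2(\nu_\theta,\nu_0)\lesssim\theta$, with constants depending on $a$. The condition $a>1$ only ensures that the half-disks are disjoint, so the density is exactly $1/\pi$ on $A_0$ with no overlap.
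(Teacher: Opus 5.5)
Your proof is correct and follows the paper's argument essentially step by step. You use the same double-wedge decomposition, giving $4a^2[\cos^2(\theta/2)\,\theta/\pi+\sin^2(\theta/2)(1-\theta/\pi)]$. You use the same rotation coupling for $W_2$; you also verify $\nu_\theta=(R_\theta)_\#\nu_0$ via $T_\theta=R_\theta T_0R_\theta^{-1}$, which the paper only asserts. For Hellinger you use the same symmetric-difference bound, and your tubular-neighborhood argument simply makes explicit the paper's appeal to finite perimeter.

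One harmless slip: the two translated half-disks are disjoint for every $a>0$, since they lie in $\{x\cdot u_\theta\ge a\}$ and $\{x\cdot u_\theta\le -a\}$. So $a>1$ is not what is needed for disjointness.
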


\noindent Lemmas~\ref{lem:twopoint-separation} and~\ref{lem:rotating-geometry} together yield the following result:
\begin{theorem}[Polynomial separation with a benign source]
\label{thm:john-polynomial-separation}
Let $\mu$ be the uniform distribution on the unit disk in $\mathbb R^2$. There exist constants $c,C>0$ such that for every sufficiently large $n$ there is a two-point target class
\[
\targetclass_n=\{\nu_0,\nu_{\theta_n}\},
\qquad
\theta_n=\kappa/n,
\]
with $\kappa>0$ sufficiently small, for which
\begin{align}
\label{eq:john-ot-lower}
\otminimaxrisk(\targetclass_n)
\gtrsim
\frac{1}{n},
\end{align}
and
\begin{align}
\label{eq:john-valid-upper}
\validminimaxrisk(\targetclass_n)
\lesssim
\frac{1}{n^2}.
\end{align}
In particular,
\[
\frac{\otminimaxrisk(\targetclass_n)}
{\validminimaxrisk(\targetclass_n)}
\gtrsim n .
\]
\end{theorem}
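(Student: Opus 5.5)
The plan is to combine Lemma~\ref{lem:twopoint-separation} with Lemma~\ref{lem:rotating-geometry} at $\theta=\theta_n=\kappa/n$.

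\emph{Lower bound.} Consider the OT lower bound \eqref{eq:twopoint-ot-lower}. By \eqref{eq:rotating-map-gap}, $\Delta_{\OT}^2=\|T_{\theta_n}-T_0\|_{L^2(\mu)}^2\geq c\,\theta_n = c\kappa/n$ for $n$ large enough that $\theta_n$ lies in the small-$\theta$ regime of that lemma. It then suffices to show that the testing error $e_n$ is bounded below by an absolute constant. For this I will use the standard Hellinger bound for two-point testing. Tensorization gives $\rho(\nu_0^{\otimes n},\nu_{\theta}^{\otimes n})=\rho(\nu_0,\nu_\theta)^n=(1-H^2(\nu_0,\nu_\theta))^n$. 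Le Cam's inequality then gives
\[
e_n\ \geq\ \tfrac12\bigl(1-\TV(\nu_0^{\otimes n},\nu_\theta^{\otimes n})\bigr)\ \geq\ \tfrac14\,\rho(\nu_0^{\otimes n},\nu_\theta^{\otimes n})^2 .
\]
By \eqref{eq:rotating-hellinger}, $H^2(\nu_0,\nu_{\theta_n})\leq C\kappa/n$, so $(1-C\kappa/n)^{n}\geq e^{-2C\kappa}$ once $C\kappa/n\leq 1/2$. Hence $e_n\gtrsim e^{-4C\kappa}$, a constant for fixed $\kappa$. Plugging in yields $\otminimaxrisk(\targetclass_n)\geq \tfrac14\cdot c\kappa n^{-1}\cdot e^{-4C\kappa}\gtrsim 1/n$.

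\emph{Upper bound.} For \eqref{eq:twopoint-valid-upper}, I need explicit valid maps $S_0,S_{\theta}$ with $\|S_0-S_\theta\|_{L^2(\mu)}^2\lesssim\theta^2$. The natural choice uses the fact that $\mu$ is rotation invariant and $\nu_\theta=(R_\theta)_\#\nu_0$, as the figure indicates. Concretely, $T_\theta = R_\theta\circ T_0\circ R_\theta^{-1}$. Take $S_0:=T_0$ and $S_\theta:=R_\theta\circ T_0$. Then $(S_\theta)_\#\mu=(R_\theta)_\#\nu_0=\nu_\theta$, so $S_\theta$ is valid. Moreover $\|S_\theta(x)-S_0(x)\|_2=\|(R_\theta-I)T_0(x)\|_2\leq \|R_\theta-I\|_{\rm op}\,(1+a)\leq \theta(1+a)$, since $|T_0(x)|\leq 1+a$ on the disk. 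Thus $\validminimaxrisk(\targetclass_n)\leq (1+a)^2\theta_n^2\lesssim 1/n^2$. This also explains why \eqref{eq:rotating-w2-gap} is only $\lesssim\theta^2$.

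The ratio statement follows by dividing the two bounds. The step I expect to need the most care is making sure the constants are arranged consistently. The lower bound on $e_n$ must hold uniformly in large $n$ for a fixed small $\kappa$, and $\theta_n$ must lie in the regime where Lemma~\ref{lem:rotating-geometry} applies. The rest is mechanical once that lemma is granted. I also need to check that the rotation identity $T_\theta=R_\theta T_0R_\theta^{-1}$ matches the definition of $T_\theta$, which is immediate from $\sgn(x\cdot u_\theta)=\sgn(R_\theta^{-1}x\cdot u_0)$.
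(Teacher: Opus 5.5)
Your proposal is correct and follows the paper's proof. The lower bound comes from the two-point reduction, with $H^2(\nu_0,\nu_{\theta_n})\lesssim\kappa/n$ keeping $e_n$ bounded below, and the upper bound from the data-independent estimator $T_0$ paired with the valid map $R_{\theta_n}\circ T_0$. Two small differences are worth noting: your affinity-tensorization plus Le Cam step bounds $e_n$ for any fixed $\kappa$, not only small $\kappa$, and your check that $T_\theta=R_\theta T_0R_\theta^{-1}$ spells out the rotation-invariance identity $\nu_\theta=(R_\theta)_\#\nu_0$ that the paper asserts without proof.
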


\noindent The source distribution $\mu$ in Theorem~\ref{thm:john-polynomial-separation} is quite benign: it is bounded above and below on a convex domain, and we believe with a bit more 
analytic effort the source could be replaced by a standard Normal distribution.
To place this result in context, a line of work~\cite{letrouit2024gluing,merigot2020,delalande2023,letrouit2025lectures} has shown that for source measures which are upper and lower bounded on a so-called ``John domain'' (which includes convex domains as a special case), OT maps to (arbitrary, compactly supported) targets $\nu, \eta$ satisfy stability bounds of the form:
\[
\|T_\nu-T_\eta\|_{L^2(\mu)}
\lesssim W_2(\nu,\eta)^{1/6}.
\]
This stability rules out the most extreme form of instability (exhibited in Theorem~\ref{thm:exponential-separation}) for fixed regular sources: the OT risk cannot remain bounded away from zero while the valid map risk becomes exponentially small. Nevertheless, the preceding construction shows that polynomial amplifications of the valid map risk are already possible for benign source distributions.

\subsubsection{An Exponential Separation for Adversarial Sources}

The previous example uses a fixed, regular source. It gives a polynomial separation, but the H\"older stability of OT maps for regular sources prevents the most extreme behavior. If we allow the source distribution to be more adversarially chosen a much larger separation is possible.

Once again we follow a classical scheme to prove a minimax lower bound (see Theorem 2.15 of \citet{tsybakov2008}) of constructing a collection of distributions, indexed by a hypercube,
where adjacent distributions on the hypercube are difficult to distinguish.
\begin{lemma}[Assouad lower bound]
\label{lem:assouad-map-hellinger}
Let $\{P_\sigma:\sigma\in\{\pm1\}^M\}$ be a family of distributions, and let
\[
T_\sigma:\mathcal X\to\mathbb R^d,
\qquad \sigma\in\{\pm1\}^M,
\]
be a corresponding family of maps. Let $\mu$ be a probability measure on
$\mathcal X$, and let $G_1,\ldots,G_M\subseteq\mathcal X$ be disjoint
measurable sets.

For $\sigma\in\{\pm1\}^M$, let $\sigma^{(j)}$ denote the vector obtained
from $\sigma$ by flipping the $j$th coordinate. Suppose that for each
$j=1,\ldots,M$ there exists $\Delta_j>0$ such that
\begin{align}
\label{eq:assouad-local-separation}
\int_{G_j}
\|T_\sigma(x)-T_{\sigma^{(j)}}(x)\|_2^2\,d\mu(x)
\geq
\Delta_j
\end{align}
for every $\sigma\in\{\pm1\}^M$. Suppose also that the Hellinger
affinities satisfy
\begin{align}
\label{eq:assouad-affinity}
\rho(P_\sigma^{\otimes n},P_{\sigma^{(j)}}^{\otimes n})
:=
\int \sqrt{dP_\sigma^{\otimes n} dP^{\otimes n}_{\sigma^{(j)}}}
\geq
\rho_0
\end{align}
for every $\sigma$ and every $j$, where $\rho_0\in(0,1]$.

Then
\begin{align}
\label{eq:assouad-map-conclusion}
\inf_{\widehat T}
\sup_{\sigma\in\{\pm1\}^M}
\mathbb E_\sigma
\int
\|\widehat T(x)-T_\sigma(x)\|_2^2\,d\mu(x)
\geq
\frac{1-\sqrt{1-\rho_0^2}}{8}
\sum_{j=1}^M \Delta_j .
\end{align}
Here the infimum is over all estimators $\widehat T$ based on $n$ 
observations from $P_\sigma$.
\end{lemma}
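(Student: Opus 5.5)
This is a standard Assouad argument in which the loss decomposes over the disjoint cells $G_j$. Fix an arbitrary estimator $\widehat T$. Disjointness of the $G_j$ gives, for every $\sigma$,
\[
\int \|\widehat T(x)-T_\sigma(x)\|_2^2\,d\mu(x)\ \ge\ \sum_{j=1}^M L_j(\widehat T,\sigma),
\qquad
L_j(\widehat T,\sigma):=\int_{G_j}\|\widehat T(x)-T_\sigma(x)\|_2^2\,d\mu(x).
\]
We bound the supremum over $\sigma$ below by the average over the uniform prior on $\{\pm1\}^M$. It then suffices to show, for each $j$ separately, that
\[
\frac{1}{2^M}\sum_\sigma \mathbb E_\sigma L_j(\widehat T,\sigma)\ \ge\ \frac{1-\sqrt{1-\rho_0^2}}{8}\,\Delta_j .
\]

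To prove this, pair each $\sigma$ with $\sigma^{(j)}$; these pairs partition the hypercube into $2^{M-1}$ pairs. For a fixed pair, the triangle inequality in $L^2(\mu|_{G_j})$ together with $(u+v)^2\le 2u^2+2v^2$ and \eqref{eq:assouad-local-separation} give the pointwise (in the data) bound
\[
L_j(\widehat T,\sigma)+L_j(\widehat T,\sigma^{(j)})\ \ge\ \tfrac12\int_{G_j}\|T_\sigma-T_{\sigma^{(j)}}\|_2^2\,d\mu\ \ge\ \tfrac{\Delta_j}{2}.
\]
Write $P=P_\sigma^{\otimes n}$ and $Q=P_{\sigma^{(j)}}^{\otimes n}$, with densities $p,q$ with respect to a common dominating measure. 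Since $L_j\ge 0$,
\[
\mathbb E_P L_j(\widehat T,\sigma)+\mathbb E_Q L_j(\widehat T,\sigma^{(j)})\ \ge\ \int \min(p,q)\,\bigl(L_j(\widehat T,\sigma)+L_j(\widehat T,\sigma^{(j)})\bigr)\ \ge\ \frac{\Delta_j}{2}\int\min(p,q).
\]
It remains to lower-bound $\int\min(p,q)=1-\TV(P,Q)$ using the affinity. Le Cam's inequality gives $\TV(P,Q)\le\sqrt{1-\rho(P,Q)^2}$, which one can verify by Cauchy--Schwarz on $\int|p-q|=\int|\sqrt p-\sqrt q|(\sqrt p+\sqrt q)$. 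Combining this with \eqref{eq:assouad-affinity} yields $\int\min(p,q)\ge 1-\sqrt{1-\rho_0^2}$.

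Summing over the $2^{M-1}$ pairs and dividing by $2^M$ gives an average of at least $\tfrac14\bigl(1-\sqrt{1-\rho_0^2}\bigr)\Delta_j$ per coordinate. This is better than the claimed $1/8$, so the stated constant holds with room to spare; the slack is harmless and may reflect a cruder step in the paper's own argument. Summing over $j$ and taking the infimum over $\widehat T$ gives \eqref{eq:assouad-map-conclusion}.

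The main obstacle is the Le Cam step relating total variation to the Hellinger affinity. The exact constant depends on the convention used for $\rho$, and here the paper fixes $\rho=\int\sqrt{dP\,dQ}$, which is the convention under which the bound above holds. The rest is routine bookkeeping; the only measurability point is that $L_j(\widehat T,\sigma)$ is a measurable function of the data, which holds whenever $\widehat T$ is jointly measurable in the data and $x$.
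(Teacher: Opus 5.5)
Your proof is correct and follows essentially the same Assouad argument as the paper: restrict the loss to the disjoint cells $G_j$, replace the supremum over $\sigma$ by the uniform average over $\{\pm1\}^M$, pair $\sigma$ with $\sigma^{(j)}$, and conclude with $\TV(P,Q)\le\sqrt{1-\rho(P,Q)^2}$. The only difference is the two-point step: the paper builds a nearest-neighbor test of the $j$th bit and uses only $\int_{G_j}\|\widehat T-T_\sigma\|_2^2\,d\mu\ge\Delta_j/4$ on the event that this test errs, whereas you integrate the pointwise bound $L_j(\widehat T,\sigma)+L_j(\widehat T,\sigma^{(j)})\ge\Delta_j/2$ against $\min(p,q)$, which skips the test and is why you get the constant $1/4$ instead of the paper's $1/8$.
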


Our construction is inspired by another classical example (see, for instance, Section 1.1 in the paper~\cite{letrouit2025lectures} or Figure 2 in~\cite{letrouit2026unstable}) where the OT map is non-unique and consequently unstable to perturbations in the Wasserstein sense. However, once again several careful adjustments are needed to this basic construction to prove a statistical minimax lower bound. 

Our construction is illustrated in Figure~\ref{fig:exp-separation-construction}. Let  $\varepsilon=1/n$. Fix a constant $a>0$ and set $\theta_n = \exp(-n)$. We first describe the source distribution. 
Choose points $u_1<\cdots<u_n$, where $u_j = 4\varepsilon j$. 
Associated with each $u_j$ define two source centers (which we refer to as a block)
\[
x_{j,L}=(u_j-\varepsilon,0),
\qquad
x_{j,R}=(u_j+\varepsilon,0).
\]
Choose a radius $r_n>0$ satisfying
\[
r_n \ll  \varepsilon \theta_n .
\]
Let $B_{j,L}$ and $B_{j,R}$ be the Euclidean balls of radius $r_n$
centered at $x_{j,L}$ and $x_{j,R}$. Define $\mu$ to be the probability
distribution which assigns mass $\varepsilon/2$ uniformly to each of these $2n$
balls.

We now define the collection of target distributions. For each sign vector $\sigma=(\sigma_1,\ldots,\sigma_n)\in\{\pm1\}^n$,
define two target centers (which we also refer to as a block) associated with the center $u_j$ by
\[
y_{j,R}^{\sigma}=(u_j+\theta_n,\sigma_j a),
\qquad
y_{j,L}^{\sigma}=(u_j-\theta_n,-\sigma_j a),
\]
and let $\nu_\sigma$ assign mass $\varepsilon/2$ uniformly to each of the balls of
radius $r_n$ centered at $y_{j,L}^{\sigma}$ and $y_{j,R}^{\sigma}$, over
all $j=1,\ldots,n$. Let
\[
\targetclass_n:=\{\nu_\sigma:\sigma\in\{\pm1\}^n\}.
\]
With the source distribution and target class in place, Lemma~\ref{lem:lower-bound-analysis} in the Appendix analyzes this construction to control the desired transport map separation and Hellinger distances. Lemma~\ref{lem:lower-bound-analysis} and Lemma~\ref{lem:assouad-map-hellinger} together yield the following conclusion:
\begin{figure}[t]
\centering
\begin{tikzpicture}[scale=1.08,>=stealth]

  \def\r{0.105}
  \def\epsv{0.32}
  \def\thetav{0.10}
  \def\av{0.62}
  \def\ys{1.25}
  \def\yt{-1.20}

  \def\Sourceblock#1#2{%
    \begin{scope}[shift={(#1,\ys)}]
      \draw[dashed,gray!65] (0,-0.42) -- (0,0.42);
      \filldraw[fill=gray!25,draw=black] (-\epsv,0) circle (\r);
      \filldraw[fill=gray!25,draw=black] ( \epsv,0) circle (\r);
      \node[font=\scriptsize] at (0,-0.62) {$u_{#2}$};
    \end{scope}
  }

  \def\TargetPlus#1#2{%
    \begin{scope}[shift={(#1,\yt)}]
      \draw[dashed,gray!65] (0,-0.92) -- (0,0.92);
      \filldraw[fill=gray!25,draw=black] (-\thetav,-\av) circle (\r);
      \filldraw[fill=gray!25,draw=black] ( \thetav, \av) circle (\r);
      \node[font=\scriptsize] at (0,-1.12) {$u_{#2}$};
    \end{scope}
  }

  \def\TargetMinus#1#2{%
    \begin{scope}[shift={(#1,\yt)}]
      \draw[dashed,gray!65] (0,-0.92) -- (0,0.92);
      \filldraw[fill=gray!25,draw=black] (-\thetav, \av) circle (\r);
      \filldraw[fill=gray!25,draw=black] ( \thetav,-\av) circle (\r);
      \node[font=\scriptsize] at (0,-1.12) {$u_{#2}$};
    \end{scope}
  }

  \node[anchor=east,font=\small] at (-0.95,\ys) {source $\mu$};
  \node[anchor=east,font=\small] at (-0.95,\yt) {target $\nu_\sigma$};

  \Sourceblock{0.0}{1}
  \Sourceblock{1.55}{2}
  \node at (2.70,\ys) {$\cdots$};
  \Sourceblock{3.95}{j}
  \node at (5.10,\ys) {$\cdots$};
  \Sourceblock{6.35}{n}

  \TargetPlus{0.0}{1}
  \TargetMinus{1.55}{2}
  \node at (2.70,\yt) {$\cdots$};
  \TargetPlus{3.95}{j}
  \node at (5.10,\yt) {$\cdots$};
  \TargetMinus{6.35}{n}

  \node[font=\scriptsize,align=center] at (3.20,2.02)
  {each source ball has mass $\varepsilon/2$};

  \node[font=\scriptsize,align=center] at (3.20,-2.68)
  {the orientation of each target pair encodes the sign $\sigma_j$};

\end{tikzpicture}
\caption{Illustration of the exponential-separation construction. The source distribution $\mu$ is supported on $n$ well-separated blocks, with block $j$ consisting of two small balls centered at $(u_j-\varepsilon,0)$ and $(u_j+\varepsilon,0)$. For each sign vector $\sigma\in\{\pm1\}^n$, the target distribution $\nu_\sigma$ places two small balls near $(u_j-\theta_n,-\sigma_j a)$ and $(u_j+\theta_n,\sigma_j a)$. Each block encodes one hidden bit $\sigma_j$.}
\label{fig:exp-separation-construction}
\end{figure}

\begin{theorem}[Exponential separation]
\label{thm:exponential-separation}
For every sufficiently large $n$ the source distribution $\mu$ and target class $\targetclass_n$ described above ensure that:
\begin{align}
\label{eq:exp-ot-lower}
\otminimaxrisk(\targetclass_n)
\gtrsim 1,
\end{align}
while
\begin{align}
\label{eq:exp-valid-upper}
\validminimaxrisk(\targetclass_n)
\lesssim
\exp(-2n).
\end{align}
Consequently,
\[
\frac{\otminimaxrisk(\targetclass_n)}
{\validminimaxrisk(\targetclass_n)}
\gtrsim \exp(2n).
\]
\end{theorem}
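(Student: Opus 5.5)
Both bounds follow from the Assouad bound (Lemma~\ref{lem:assouad-map-hellinger}) for the OT risk and an explicit comparison map for the valid risk, in the spirit of Lemma~\ref{lem:twopoint-separation}. The geometric input is what Lemma~\ref{lem:lower-bound-analysis} records.

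\emph{OT maps.} Within block $j$ the source balls sit at $x_{j,L}, x_{j,R}$ on the horizontal axis. The target balls sit at $(u_j\mp\theta_n, \mp\sigma_j a)$, and the blocks are separated by $4\varepsilon \gg \varepsilon$. I would first show that the OT map sends each source block to its own target block. For this I would verify cyclical monotonicity of a candidate map, or equivalently exhibit a convex potential. Within block $j$, matching $x_{j,L}\mapsto y^\sigma_{j,L}$ and $x_{j,R}\mapsto y^\sigma_{j,R}$ costs $2(\varepsilon-\theta_n)^2 + 2a^2$ per unit mass. The crossed matching costs $2(\varepsilon+\theta_n)^2 + 2a^2$. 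Since $\theta_n>0$, the uncrossed matching wins, by a margin of $8\varepsilon\theta_n$ in squared distance. This margin is what dominates the ball radius $r_n \ll \varepsilon\theta_n$: it guarantees that the OT map is, up to an $O(r_n)$ translation inside each ball, the rigid shift of each source ball to the corresponding target ball.

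\emph{OT lower bound.} Flipping $\sigma_j$ swaps the vertical sign of both target balls in block $j$. So on $G_j := B_{j,L}\cup B_{j,R}$ we have $\|T_\sigma - T_{\sigma^{(j)}}\|\ge 2a - O(r_n+\theta_n)$ on mass $\varepsilon$, which gives $\Delta_j \gtrsim a^2\varepsilon$ and hence $\sum_j\Delta_j \gtrsim a^2$.

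\emph{Affinity.} $\nu_\sigma$ and $\nu_{\sigma^{(j)}}$ differ only in block $j$. There the supports are uniform balls of radius $r_n$ centered at $(u_j\pm\theta_n,\pm a)$ versus $(u_j\mp\theta_n, \pm a)$, and since $\theta_n \gg r_n$ these supports are disjoint. The total variation is therefore $\varepsilon$. The Hellinger affinity of one sample is $1-\varepsilon$, so for $n$ samples it is $(1-1/n)^n \ge e^{-1}/2 =: \rho_0$ for large $n$.

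\emph{Conclusion for the OT risk.} Lemma~\ref{lem:assouad-map-hellinger} then gives $\otminimaxrisk \gtrsim (1-\sqrt{1-\rho_0^2})\,a^2 \gtrsim 1$. The main obstacle is establishing rigorously that the OT map is the block-wise rigid map. The whole argument depends on the margin $\varepsilon\theta_n \gg r_n$ beating perturbations from points inside the balls. I would handle this by writing down a piecewise-affine convex potential $\varphi(x)= \tfrac12\|x\|^2 + \langle x, v_k\rangle + c_k$ on each ball, with $v_k$ the ball's translation vector, and choosing the constants $c_k$ so that the correct piece is active on each ball. Checking the resulting inequalities reduces to the cost comparisons above plus $O(r_n)$ errors.

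\emph{Valid upper bound.} By \eqref{eq:twopoint-valid-upper}-type reasoning, I would take the estimator that ignores the data and outputs a fixed map $S$. A natural choice sends $B_{j,L}\to$ the ball at $(u_j-\theta_n,-a)$ and $B_{j,R}\to$ the ball at $(u_j+\theta_n,a)$, i.e.\ the map for $\sigma_j=+1$, but with a crucial twist. For $\nu_\sigma$ with $\sigma_j=-1$, the valid map $S_\sigma$ sends $B_{j,L}\to(u_j+\theta_n,a)$ and $B_{j,R}\to(u_j-\theta_n,-a)$, a crossed and non-optimal matching. Then $S$ and $S_\sigma$ disagree on block $j$ by up to about $2a$, which is too large. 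The fix is to note that when $\sigma_j=-1$ the target balls are at $(u_j-\theta_n,a)$ and $(u_j+\theta_n,-a)$. I therefore want $S$ to send the $j$th source block to points $(u_j,\pm a)$ that are $\theta_n$-close to both configurations. Take $S$ mapping $B_{j,L}\to$ a ball at $(u_j,-a)$ and $B_{j,R}\to (u_j,a)$. For $\sigma_j=+1$ use $S_\sigma$ with $B_{j,L}\to(u_j-\theta_n,-a)$ and $B_{j,R}\to(u_j+\theta_n,a)$. For $\sigma_j=-1$ use $B_{j,L}\to(u_j+\theta_n,-a)$ and $B_{j,R}\to(u_j-\theta_n,a)$, which is valid but non-optimal. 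In each case the maps differ by a translation of size $\theta_n$. Hence $\validminimaxrisk \le \sup_\sigma\|S-S_\sigma\|^2_{L^2(\mu)} \le \theta_n^2 = e^{-2n}$.

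Dividing the two bounds gives the ratio $\gtrsim \exp(2n)$.
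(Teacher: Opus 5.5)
\textbf{Gap: the proposed OT potential does not exist.} Everything apart from identifying the OT map matches the paper's proof: $\Delta_j=4a^2\varepsilon$ on $G_j$, one-sample affinity $1-\varepsilon$ tensorizing to $(1-1/n)^n$, Assouad's lemma, and a data-independent comparison map for the valid risk. The problem is in the step you flag as the main obstacle. The certificate you propose, $\varphi(x)=\tfrac12\|x\|_2^2+\max_k(\langle x,v_k\rangle+c_k)$ with the $k$th piece active on the $k$th ball, cannot be built for any choice of constants $c_k$.

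In block $j$ the translation vectors are $v_{j,L}=(\varepsilon-\theta_n,-\sigma_j a)$ and $v_{j,R}=(\theta_n-\varepsilon,\sigma_j a)$. Requiring the correct piece to be active at the two centers gives $\langle x_{j,L},v_{j,L}-v_{j,R}\rangle\geq c_{j,R}-c_{j,L}$ and $\langle x_{j,R},v_{j,R}-v_{j,L}\rangle\geq c_{j,L}-c_{j,R}$. Adding these forces $\langle x_{j,L}-x_{j,R},v_{j,L}-v_{j,R}\rangle\geq 0$, but this inner product equals $-4\varepsilon(\varepsilon-\theta_n)<0$. Put differently, $\varphi-\tfrac12\|\cdot\|_2^2$ would be a convex function with gradient $v_k$ on the $k$th ball, so the displacement field would have to be monotone. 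Yet the two balls of a block move toward each other horizontally by almost $\varepsilon$.

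Your claim that the required inequalities ``reduce to the cost comparisons'' is exactly where this breaks. The cost comparison is the statement $\langle x_{j,L}-x_{j,R},\,y^\sigma_{j,L}-y^\sigma_{j,R}\rangle=4\varepsilon\theta_n>0$. That differs from the inner product above by $\|x_{j,L}-x_{j,R}\|_2^2=4\varepsilon^2$, which is the dominant term. The true Brenier potential is convex, but it is not $\tfrac12\|x\|_2^2$ plus a convex function.

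\textbf{How to repair it.} Do not commit to an explicit potential; argue with cyclical monotonicity, as the paper does in Lemma~\ref{lem:isolated-blob-matching}. Every point of a source ball is strictly closer to every point of its assigned target ball than to any point of any other target ball. At the level of centers the gap is $\gtrsim_a\varepsilon\theta_n$: the nearest competitor is the opposite ball in the same block, while target balls in other blocks sit at horizontal distance at least $3\varepsilon-\theta_n$. Passing from centers to balls costs only $O(r_n)$. A cycle argument on the support of an optimal coupling then rules out off-diagonal mass. Translation is optimal between each matched pair of equal balls, and uniqueness of the Brenier map gives $T_\sigma$ exactly. Alternatively, you can verify $c$-cyclical monotonicity of the graph of the blockwise translation directly and appeal to Rockafellar's theorem; the resulting potential is just not of your form.

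\textbf{On the valid-risk bound.} Your midpoint comparison map is correct and gives $\theta_n^2$. However, your reason for discarding $T_o$ came from writing the $\sigma_j=+1$ centers for $\sigma_j=-1$. With the correct centers $(u_j-\theta_n,a)$ and $(u_j+\theta_n,-a)$, $T_o$ is pointwise within $2\theta_n$ of a valid map. That is the paper's argument, and it gives $4\theta_n^2$.
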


The construction in Theorem~\ref{thm:exponential-separation} is intentionally adversarial. The source distribution changes with $n$, consists of many tiny components, and the radius $r_n$ must be much smaller than the horizontal perturbation scale $\theta_n=\exp(-n)$. Thus the source geometry and density become increasingly ill-conditioned. This is exactly what allows an exponential valid map improvement without contradicting qualitative or quantitative stability results for fixed regular sources.

The two examples we have given identify two different regimes. For a fixed regular source on a convex domain, OT stability rules out catastrophic instability, but it still permits polynomial separations between the OT map risk and valid map risk. With a more adversarial source distribution the OT map can be forced to encode many nearly invisible bits and OT map estimation can have constant risk while a fixed valid transport map is exponentially accurate.

\section{Discussion}

In practice, using diffusion and flow models involves a range of design decisions that can significantly influence the quality of generated samples. These include choices about neural network architectures, discretization schemes, noise schedules, and optimizer hyperparameters. Ultimately, these choices can serve to inject bias and regularize the methods in favorable ways, pulling them away from the idealized goal of approximating an unregularized transport map or plan. Given some of the results of this paper on the hardness of approximating even a valid transport map in some settings, it seems likely that explaining the statistical benefits of these methods in these challenging settings will require a deeper understanding of these practical design choices.

To distinguish optimal transport from sub-optimal transport, we considered a pair of idealized constructions. These examples are useful because they isolate a specific mechanism: the optimal transport map may encode a fragile global matching structure, while many non-optimal transports can avoid this matching and still push the source distribution to the target. A difficult but natural question is to understand the extent to which real data distributions fall into stable or unstable regimes of this kind. Stability is not simply a property of the source distribution or the target distribution in isolation; it depends on the interaction between the two, and on the geometry of their supports.
This issue seems particularly relevant for modern generative modeling. Classical regularity theory gives strong stability guarantees under convexity, density, and curvature assumptions and recent quantitative stability results show that even fairly weak geometric assumptions can imply polynomial control of the OT map \cite{letrouit2024gluing,merigot2020,delalande2023,letrouit2025lectures}. On the other hand, these assumptions are far from the geometry of realistic data distributions, which often have multiple modes, thin or lower-dimensional structure, and unevenly separated components. A useful direction for future work would be to develop data-driven diagnostics that distinguish stable from unstable transport problems. Such a diagnostic could distinguish when the optimal transport map is a statistically reasonable target, from situations when sub-optimal transport mechanisms are not merely computationally convenient but statistically preferable.

We focused on the ability of various methods to estimate valid transport maps. This provides a principled framework for comparison, even when methods target different population quantities. However, one should be cautious when interpreting the practical implications of our results: in generative modeling, the objectives of sample quality and diversity are often only weakly correlated with the goal of estimating a transport map. Developing alternative formulations for these practical goals that still facilitate rigorous mathematical analysis is an interesting direction for future work.

Finally, our framework also seems to lend itself naturally to the creation of \emph{computational} separations. We focused here on statistical separations: regimes where the tasks of estimating optimal and sub-optimal transport maps have different sample complexities. A complementary question is whether there are settings where
finding a valid sub-optimal transport map can be rigorously shown to be computationally easier than finding the optimal one. 

\section*{Acknowledgements}
We are grateful to Narayanaswamy Balakrishnan, Arun Kuchibhotla, Gonzalo Mena, Andrej Risteski, Dejan Slepcev and Larry Wasserman for helpful discussions. We used ChatGPT to improve, proofread and produce figures for this manuscript. This research was supported in part by the NSF grant DMS-2310632.

\bibliographystyle{abbrvnat}
\bibliography{transport}

\begin{thebibliography}{29}
\providecommand{\natexlab}[1]{#1}
\providecommand{\url}[1]{\texttt{#1}}
\expandafter\ifx\csname urlstyle\endcsname\relax
  \providecommand{\doi}[1]{doi: #1}\else
  \providecommand{\doi}{doi: \begingroup \urlstyle{rm}\Url}\fi

\bibitem[Albergo et~al.(2025)Albergo, Boffi, and
  Vanden-Eijnden]{albergo2023stochastic}
M.~S. Albergo, N.~M. Boffi, and E.~Vanden-Eijnden.
\newblock Stochastic interpolants: A unifying framework for flows and
  diffusions.
\newblock \emph{Journal of Machine Learning Research}, 26\penalty0
  (209):\penalty0 1--80, 2025.

\bibitem[Balakrishnan and Manole(2026)]{balakrishnan2026stability}
S.~Balakrishnan and T.~Manole.
\newblock Stability bounds for smooth optimal transport maps and their
  statistical implications.
\newblock \emph{Electronic Journal of Statistics}, 2026.
\newblock To appear.

\bibitem[Balakrishnan et~al.(2025)Balakrishnan, Manole, and
  Wasserman]{balakrishnan2025statistical}
S.~Balakrishnan, T.~Manole, and L.~Wasserman.
\newblock Statistical inference for optimal transport maps: Recent advances and
  perspectives.
\newblock \emph{Statistical Science}, 2025.

\bibitem[Brenier(1991)]{brenier1991}
Y.~Brenier.
\newblock Polar factorization and monotone rearrangement of vector-valued
  functions.
\newblock \emph{Communications on Pure and Applied Mathematics}, 44:\penalty0
  375--417, 1991.

\bibitem[Caffarelli(1992)]{caffarelli1992regularity}
L.~A. Caffarelli.
\newblock The {{Regularity}} of {{Mappings}} with a {{Convex Potential}}.
\newblock \emph{Journal of the American Mathematical Society}, 5:\penalty0
  99--104, 1992.

\bibitem[Deb et~al.(2021)Deb, Ghosal, and Sen]{deb2021}
N.~Deb, P.~Ghosal, and B.~Sen.
\newblock Rates of {{Estimation}} of {{Optimal Transport Maps}} using
  {{Plug}}-in {{Estimators}} via {{Barycentric Projections}}.
\newblock \emph{Advances in Neural Information Processing Systems 34}, 2021.

\bibitem[Delalande and Merigot(2023)]{delalande2023}
A.~Delalande and Q.~Merigot.
\newblock Quantitative stability of optimal transport maps under variations of
  the target measure.
\newblock \emph{Duke Mathematical Journal}, 172\penalty0 (17):\penalty0
  3321--3357, 2023.

\bibitem[Dinh et~al.(2017)Dinh, Sohl-Dickstein, and Bengio]{dinh2017density}
L.~Dinh, J.~Sohl-Dickstein, and S.~Bengio.
\newblock Density estimation using real {NVP}.
\newblock In \emph{International Conference on Learning Representations}, 2017.

\bibitem[Divol et~al.(2022)Divol, {Niles-Weed}, and Pooladian]{divol2022a}
V.~Divol, J.~{Niles-Weed}, and A.-A. Pooladian.
\newblock Optimal transport map estimation in general function spaces.
\newblock \emph{The Annals of Statistics}, 2022.

\bibitem[Figalli(2017)]{figalli2017monge}
A.~Figalli.
\newblock \emph{The {Monge--Amp{\`e}re} Equation and Its Applications}.
\newblock Zurich Lectures in Advanced Mathematics. European Mathematical
  Society, Z{\"u}rich, 2017.

\bibitem[Ho et~al.(2020)Ho, Jain, and Abbeel]{ho2020denoising}
J.~Ho, A.~Jain, and P.~Abbeel.
\newblock Denoising diffusion probabilistic models.
\newblock In \emph{Advances in Neural Information Processing Systems},
  volume~33, pages 6840--6851. Curran Associates, Inc., 2020.

\bibitem[H{\"u}tter and Rigollet(2021)]{hutter2021}
J.-C. H{\"u}tter and P.~Rigollet.
\newblock Minimax rates of estimation for smooth optimal transport maps.
\newblock \emph{The Annals of Statistics}, 49:\penalty0 1166--1194, 2021.

\bibitem[Letrouit(2025)]{letrouit2025lectures}
C.~Letrouit.
\newblock Lectures on quantitative stability of optimal transport, May 2025.
\newblock Notes du Cours Peccot, Coll{\`e}ge de France, May--June 2025.
  Preliminary version.

\bibitem[Letrouit(2026)]{letrouit2026unstable}
C.~Letrouit.
\newblock Unstable optimal transport maps.
\newblock \emph{Comptes Rendus. Math{\'e}matique}, 364:\penalty0 333--344,
  2026.

\bibitem[Letrouit and M{\'e}rigot(2024)]{letrouit2024gluing}
C.~Letrouit and Q.~M{\'e}rigot.
\newblock Gluing methods for quantitative stability of optimal transport maps,
  2024.

\bibitem[Lipman et~al.(2023)Lipman, Chen, Ben-Hamu, Nickel, and
  Le]{lipman2023flowmatching}
Y.~Lipman, R.~T.~Q. Chen, H.~Ben-Hamu, M.~Nickel, and M.~Le.
\newblock Flow matching for generative modeling.
\newblock In \emph{The Eleventh International Conference on Learning
  Representations}, 2023.

\bibitem[Liu et~al.(2023)Liu, Gong, and Liu]{liu2022rectified}
X.~Liu, C.~Gong, and Q.~Liu.
\newblock Flow straight and fast: Learning to generate and transfer data with
  rectified flow.
\newblock In \emph{The Eleventh International Conference on Learning
  Representations}, 2023.

\bibitem[Manole et~al.(2024)Manole, Balakrishnan, Niles-Weed, and
  Wasserman]{manole2024plugin}
T.~Manole, S.~Balakrishnan, J.~Niles-Weed, and L.~Wasserman.
\newblock Plugin estimation of smooth optimal transport maps.
\newblock \emph{The Annals of Statistics}, 52\penalty0 (3):\penalty0 966--998,
  2024.

\bibitem[Mena et~al.(2025)Mena, Kuchibhotla, and
  Wasserman]{mena2025statistical}
G.~Mena, A.~K. Kuchibhotla, and L.~Wasserman.
\newblock Statistical properties of {Rectified Flow}.
\newblock arXiv.2511.03193, 2025.

\bibitem[M{\'e}rigot et~al.(2020)M{\'e}rigot, Delalande, and
  Chazal]{merigot2020}
Q.~M{\'e}rigot, A.~Delalande, and F.~Chazal.
\newblock Quantitative stability of optimal transport maps and linearization of
  the 2-{W}asserstein space.
\newblock In \emph{International Conference on Artificial Intelligence and
  Statistics}, pages 3186--3196. PMLR, 2020.

\bibitem[Niles-Weed and Berthet(2022)]{weed2019a}
J.~Niles-Weed and Q.~Berthet.
\newblock Minimax estimation of smooth densities in wasserstein distance.
\newblock \emph{The Annals of Statistics}, 50\penalty0 (3):\penalty0
  1519--1540, 2022.

\bibitem[Papamakarios et~al.(2021)Papamakarios, Nalisnick, Rezende, Mohamed,
  and Lakshminarayanan]{papamakarios2021normalizing}
G.~Papamakarios, E.~Nalisnick, D.~J. Rezende, S.~Mohamed, and
  B.~Lakshminarayanan.
\newblock Normalizing flows for probabilistic modeling and inference.
\newblock \emph{Journal of Machine Learning Research}, 22\penalty0
  (57):\penalty0 1--64, 2021.

\bibitem[Peyr{\'e} and Cuturi(2019)]{peyre2019computational}
G.~Peyr{\'e} and M.~Cuturi.
\newblock Computational optimal transport with applications to data sciences.
\newblock \emph{Foundations and Trends in Machine Learning}, 11\penalty0
  (5-6):\penalty0 355--607, 2019.

\bibitem[Rezende and Mohamed(2015)]{rezende2015variational}
D.~Rezende and S.~Mohamed.
\newblock Variational inference with normalizing flows.
\newblock In \emph{Proceedings of the 32nd International Conference on Machine
  Learning}, volume~37 of \emph{Proceedings of Machine Learning Research},
  pages 1530--1538. PMLR, 2015.

\bibitem[Sohl-Dickstein et~al.(2015)Sohl-Dickstein, Weiss, Maheswaranathan, and
  Ganguli]{sohl2015deep}
J.~Sohl-Dickstein, E.~Weiss, N.~Maheswaranathan, and S.~Ganguli.
\newblock Deep unsupervised learning using nonequilibrium thermodynamics.
\newblock In \emph{Proceedings of the 32nd International Conference on Machine
  Learning}, volume~37 of \emph{Proceedings of Machine Learning Research},
  pages 2256--2265. PMLR, 2015.

\bibitem[Song et~al.(2021)Song, Sohl-Dickstein, Kingma, Kumar, Ermon, and
  Poole]{song2021score}
Y.~Song, J.~Sohl-Dickstein, D.~P. Kingma, A.~Kumar, S.~Ermon, and B.~Poole.
\newblock Score-based generative modeling through stochastic differential
  equations.
\newblock In \emph{International Conference on Learning Representations}, 2021.

\bibitem[Tsybakov(2008)]{tsybakov2008}
A.~B. Tsybakov.
\newblock \emph{Introduction to Nonparametric Estimation}.
\newblock {Springer Science \& Business Media}, 2008.

\bibitem[Villani(2003)]{villani2003topics}
C.~Villani.
\newblock \emph{Topics in Optimal Transportation}, volume~58 of \emph{Graduate
  Studies in Mathematics}.
\newblock American Mathematical Society, Providence, RI, 2003.

\bibitem[Villani(2009)]{villani2009optimal}
C.~Villani.
\newblock \emph{Optimal Transport: Old and New}, volume 338 of
  \emph{Grundlehren der mathematischen Wissenschaften}.
\newblock Springer, Berlin, Heidelberg, 2009.

\end{thebibliography}

\appendix

\clearpage

\section{The One-Dimensional Case}
\label{sec:oned}
The one-dimensional case of transport is special and in this section we discuss it briefly. In higher dimensions, stability of the OT map is a nontrivial regularity property. In one dimension, however, monotonicity gives an exact stability identity.

Throughout this subsection assume that $d=1$ and that the source distribution $\mu$ is non-atomic. For any $\eta\in\mathcal{P}_2(\mathbb{R})$, let $F_\eta$ denote its distribution function and let
\[
F_\eta^{-1}(t) := \inf\{x\in\mathbb{R}:F_\eta(x)\geq t\},
\qquad 0<t<1,
\]
denote its generalized inverse. Define the monotone rearrangement from $\mu$ to $\eta$ by
\[
T_\eta(x) := F_\eta^{-1}(F_\mu(x)).
\]
Then $T_{\eta\#}\mu=\eta$, and $T_\eta$ is the one-dimensional optimal transport map from $\mu$ to $\eta$ for the squared Euclidean cost. In particular, if the true target is $\nu$, then the OT map is
\[
T_0 = T_\nu = F_\nu^{-1}\circ F_\mu.
\]

The following observation shows that in one dimension any estimator can be replaced by a monotone estimator whose OT-map error is exactly the Wasserstein error of the induced target distribution.

\begin{lemma}[Monotonization in one dimension]
\label{lem:one-d-monotonization}
Let $\widehat{T}:\mathbb{R}\to\mathbb{R}$ be any measurable estimator, and write
\[
\widehat{\nu}:=\widehat{T}_{\#}\mu.
\]
Define its monotone rearrangement by
\[
\widetilde{T}:=T_{\widehat{\nu}}
=
F_{\widehat{\nu}}^{-1}\circ F_\mu.
\]
Then
\[
\widetilde{T}_{\#}\mu=\widehat{\nu},
\]
and
\begin{align}
\label{eq:one-d-ot-isometry}
\int |\widetilde{T}(x)-T_0(x)|^2\,d\mu(x)
=
W_2^2(\widehat{\nu},\nu).
\end{align}
Moreover,
\begin{align}
\label{eq:one-d-valid-risk}
\inf_{T_{\#}\mu=\nu}
\int |\widetilde{T}(x)-T(x)|^2\,d\mu(x)
=
W_2^2(\widehat{\nu},\nu).
\end{align}
\end{lemma}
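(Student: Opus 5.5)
The plan is to treat the three claims in turn, relying on the classical quantile representation of one-dimensional transport.

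\textbf{Pushforward claim.} Since $\mu$ is non-atomic, $F_\mu$ is continuous. Hence for $X\sim\mu$ the variable $U:=F_\mu(X)$ is uniform on $(0,1)$. A direct check: $\mathbb P(F_\mu(X)\le t)=t$ by continuity of $F_\mu$, and $U\in\{0,1\}$ has probability zero. By the standard quantile transform, $F_\eta^{-1}(U)\sim\eta$ for any $\eta$. Applying this with $\eta=\widehat\nu$ gives $\widetilde T_\#\mu=\widehat\nu$. The same argument with $\eta=\nu$ recovers that $T_0$ pushes $\mu$ to $\nu$.

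\textbf{Isometry \eqref{eq:one-d-ot-isometry}.} By the change of variables $t=F_\mu(x)$, i.e.\ since $U=F_\mu(X)$ is uniform,
\[
\int |\widetilde T(x)-T_0(x)|^2\,d\mu(x)=\mathbb E\,|F_{\widehat\nu}^{-1}(U)-F_\nu^{-1}(U)|^2=\int_0^1|F_{\widehat\nu}^{-1}(t)-F_\nu^{-1}(t)|^2\,dt .
\]
I then invoke the classical one-dimensional identity $W_2^2(\widehat\nu,\nu)=\int_0^1|F_{\widehat\nu}^{-1}-F_\nu^{-1}|^2\,dt$. This identity says the comonotone (quantile) coupling is optimal for a convex cost on $\mathbb R$; it is standard (e.g.\ Villani 2003, Thm.~2.18). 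If a self-contained argument is wanted, one shows that any coupling can be improved by uncrossing pairs, since $(x_1-y_2)^2+(x_2-y_1)^2\ge(x_1-y_1)^2+(x_2-y_2)^2$ when $x_1\le x_2$ and $y_1\le y_2$.

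Finiteness needs a brief remark. Both quantities are finite when $\widehat\nu\in\mathcal P_2$. If $\widehat\nu\notin\mathcal P_2$, both sides are $+\infty$: the left side is then $\ge \tfrac12\|F_{\widehat\nu}^{-1}\|_{L^2}^2-\|F_\nu^{-1}\|^2_{L^2}=\infty$.

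\textbf{Valid-risk identity \eqref{eq:one-d-valid-risk}.} The lower bound $\ge W_2^2(\widehat\nu,\nu)$ is exactly Lemma~\ref{lem:main} with $p=2$ applied to $\widetilde T$, whose pushforward is $\widehat\nu$. For the upper bound, $T_0$ is itself a valid map with $T_{0\#}\mu=\nu$, so the infimum is at most $\int|\widetilde T-T_0|^2\,d\mu=W_2^2(\widehat\nu,\nu)$ by \eqref{eq:one-d-ot-isometry}. This avoids needing existence of an OT map from $\widehat\nu$ to $\nu$, which could fail when $\widehat\nu$ has atoms.

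\textbf{Main obstacle.} The only nontrivial point is the quantile formula for $W_2$. I will cite it rather than reprove it. The remaining care is the measure-zero technicalities: $U$ uniform requires continuity of $F_\mu$, and the generalized inverse is only evaluated on $(0,1)$.
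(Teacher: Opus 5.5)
Your proposal is correct and follows the paper's proof essentially step for step. The shared steps are uniformity of $U=F_\mu(X)$, the quantile representation of $W_2$ for the isometry, and Lemma~\ref{lem:main} together with the choice $T=T_0$ for the valid-risk identity; your explicit pushforward check and your treatment of the case $\widehat\nu\notin\mathcal P_2$ are correct refinements that the paper leaves implicit.
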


\begin{proof}
Since $\mu$ is non-atomic, if $X\sim\mu$, then $U:=F_\mu(X)$ is uniformly distributed on $(0,1)$. Therefore,
\[
\widetilde{T}(X)=F_{\widehat{\nu}}^{-1}(U),
\qquad
T_0(X)=F_\nu^{-1}(U).
\]
Using the one-dimensional quantile representation of the Wasserstein distance, we obtain
\[
\int |\widetilde{T}(x)-T_0(x)|^2\,d\mu(x)
=
\int_0^1
|F_{\widehat{\nu}}^{-1}(t)-F_\nu^{-1}(t)|^2\,dt
=
W_2^2(\widehat{\nu},\nu),
\]
which proves \eqref{eq:one-d-ot-isometry}.

For \eqref{eq:one-d-valid-risk}, the lower bound follows from Lemma~\ref{lem:main}, since $\widetilde{T}_{\#}\mu=\widehat{\nu}$. For the matching upper bound, simply choose the valid transport map $T=T_0$ in the infimum and use \eqref{eq:one-d-ot-isometry}. This proves the claim.
\end{proof}

This identity has a useful minimax consequence. Let $\targetclass\subseteq\mathcal{P}_2(\mathbb{R})$ be a class of target distributions, and recall the minimax Wasserstein distribution-estimation risk
\[
\wtwominimaxrisk(\targetclass)
:=
\inf_{\widehat{\nu}}
\sup_{\nu\in\targetclass}
\mathbb{E}_\nu W_2^2(\widehat{\nu},\nu),
\]
where the infimum is over all estimators of the target distribution based on the sample $Y_1,\ldots,Y_n$. Then
\begin{align}
\label{eq:one-d-minimax-equivalence}
\inf_{\widehat{T}}
\sup_{\nu\in\targetclass}
\mathbb{E}_\nu
\int |\widehat{T}(x)-T_\nu(x)|^2\,d\mu(x)
=
\inf_{\widehat{T}}
\sup_{\nu\in\targetclass}
\mathbb{E}_\nu
\inf_{T_{\#}\mu=\nu}
\int |\widehat{T}(x)-T(x)|^2\,d\mu(x)
=
\wtwominimaxrisk(\targetclass).
\end{align}
Indeed, the lower bounds follow because every map estimator $\widehat{T}$ induces a distribution estimator $\widehat{\nu}=\widehat{T}_{\#}\mu$, and both the OT loss and the valid map loss are bounded below by $W_2^2(\widehat{\nu},\nu)$. Conversely, given any distribution estimator $\widehat{\nu}$, the monotone estimator $T_{\widehat{\nu}}$ attains exactly this Wasserstein error by Lemma~\ref{lem:one-d-monotonization}.

In one dimension, estimating the OT map, estimating a valid transport map, and estimating the target distribution in squared Wasserstein distance are all the same minimax problem. In particular, there is no one-dimensional analogue of the separation phenomena we constructed in our work: any mechanism that makes the OT map hard to learn also makes the induced target distribution hard to estimate in $W_2$, and therefore also makes the problem of estimating a valid transport map hard.

\section{Stochastic Maps}
\label{app:stochastic}

While in the main paper we focused primarily on the setting with deterministic transport maps, analogous results hold more generally when studying stochastic maps. Methods that form
the basis for stochastic diffusion models inherently produce stochastic maps. We first give an analogue of Lemma~\ref{lem:main} for stochastic maps.

\begin{lemma}[Stochastic projection risk]
\label{lem:stochastic-main}
Let $\widehat{K}$ be a Markov kernel from $\mathbb{R}^d$ to $\mathbb{R}^d$, and define
\[
\widehat{\nu}(A)
=
\int \widehat{K}(A\mid x)\,d\mu(x),
\qquad A\subseteq \mathbb{R}^d.
\]
Then, for every $\nu\in\mathcal{P}_p(\mathbb{R}^d)$,
\begin{align}
\label{eq:stochastic-projection}
\inf_{K:\,\int K(\cdot\mid x)d\mu(x)=\nu}
\int W_p^p\bigl(\widehat{K}(\cdot\mid x),K(\cdot\mid x)\bigr)\,d\mu(x)
=
W_p^p(\widehat{\nu},\nu).
\end{align}
\end{lemma}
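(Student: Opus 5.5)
We will prove the identity \eqref{eq:stochastic-projection} by establishing the two inequalities ``$\geq$'' and ``$\leq$'' separately. Throughout, $\mu$ is the known source and $\nu\in\mathcal P_p(\mathbb R^d)$. We may assume $\widehat\nu\in\mathcal P_p(\mathbb R^d)$; otherwise both sides are infinite.

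\emph{Lower bound.} Fix any kernel $K$ with $\int K(\cdot\mid x)\,d\mu(x)=\nu$ for which the left-hand integral is finite.
\begin{enumerate}
\item For each $x$, choose an optimal coupling $\gamma_x\in\Pi(\widehat K(\cdot\mid x),K(\cdot\mid x))$ for $W_p^p$. This choice can be made measurably in $x$, by the standard measurable selection of optimal plans (Villani, Corollary 5.22).
\item Define the probability measure $\Gamma:=\int \gamma_x\,d\mu(x)$ on $\mathbb R^d\times\mathbb R^d$.
\item Its first marginal is $\int\widehat K(\cdot\mid x)\,d\mu(x)=\widehat\nu$, and its second marginal is $\int K(\cdot\mid x)\,d\mu(x)=\nu$. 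Hence $\Gamma\in\Pi(\widehat\nu,\nu)$.
\item Therefore
\[
W_p^p(\widehat\nu,\nu)\leq\int\|y-z\|_2^p\,d\Gamma=\int W_p^p\bigl(\widehat K(\cdot\mid x),K(\cdot\mid x)\bigr)\,d\mu(x).
\]
\end{enumerate}
Taking the infimum over $K$ gives ``$\geq$''. This is just the convexity of $W_p^p$ under mixtures, i.e.\ a gluing of couplings. The measurability in step 1 can be avoided: it suffices to take $\varepsilon$-optimal couplings chosen from a countable dense family and then let $\varepsilon\to0$.

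\emph{Upper bound.} Unlike Lemma~\ref{lem:main}, we do not need an OT map, because stochastic kernels are allowed.
\begin{enumerate}
\item Let $\pi\in\Pi(\widehat\nu,\nu)$ be an optimal coupling. It exists because $\widehat\nu,\nu\in\mathcal P_p$.
\item Disintegrate $\pi(dy,dz)=\widehat\nu(dy)\,\pi(dz\mid y)$.
\item Define
\[
K(\cdot\mid x):=\int \pi(\cdot\mid y)\,\widehat K(dy\mid x),
\]
which composes $\widehat K$ with the conditional kernel of $\pi$.
\item This is a valid kernel: $\int K(\cdot\mid x)\,d\mu(x)=\int\pi(\cdot\mid y)\,d\widehat\nu(y)=\nu$.
\item For each $x$, the measure $\widehat K(dy\mid x)\,\pi(dz\mid y)$ is a coupling of $\widehat K(\cdot\mid x)$ and $K(\cdot\mid x)$. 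Hence
\[
W_p^p\bigl(\widehat K(\cdot\mid x),K(\cdot\mid x)\bigr)\leq\iint\|y-z\|_2^p\,\pi(dz\mid y)\,\widehat K(dy\mid x).
\]
\item Integrating against $\mu$ gives $\int\|y-z\|_2^p\,d\pi=W_p^p(\widehat\nu,\nu)$.
\end{enumerate}
This yields ``$\leq$'', and combining the two directions proves \eqref{eq:stochastic-projection}.

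The main technical care lies in measurability. In the lower bound, the optimal plans $\gamma_x$ must be chosen measurably. In the upper bound, $K$ must be a genuine Markov kernel, which follows because the disintegration $\pi(\cdot\mid y)$ exists and is measurable on the Polish space $\mathbb R^d$. I expect the measurable selection in the lower bound to be the only nonroutine point, and I would handle it through the cited measurable-selection result for optimal plans.
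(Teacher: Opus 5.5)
Your proposal is correct and follows essentially the same argument as the paper. For the lower bound you glue per-$x$ optimal (or $\varepsilon$-optimal) couplings of $\widehat K(\cdot\mid x)$ and $K(\cdot\mid x)$ over $x\sim\mu$; for the upper bound you compose $\widehat K$ with the disintegration kernel of an optimal coupling of $\widehat\nu$ and $\nu$ (your explicit appeal to measurable selection of optimal plans is just a slightly more careful version of the paper's remark on measurability).
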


\begin{proof}
We first prove the lower bound. Fix any kernel $K$ such that
\[
\int K(\cdot\mid x)\,d\mu(x)=\nu.
\]
For each $x$, let $\gamma_x$ be an optimal coupling between $\widehat{K}(\cdot\mid x)$ and $K(\cdot\mid x)$. More formally, one may use measurable $\varepsilon$-optimal couplings and then let $\varepsilon\downarrow 0$. Integrating these couplings over $x$ gives a coupling $\gamma$ of $\widehat{\nu}$ and $\nu$. Therefore,
\[
W_p^p(\widehat{\nu},\nu)
\leq
\int \|y-\widehat{y}\|_2^p\,d\gamma(\widehat{y},y)
=
\int W_p^p\bigl(\widehat{K}(\cdot\mid x),K(\cdot\mid x)\bigr)\,d\mu(x).
\]
Taking the infimum over $K$ gives the lower bound.

For the upper bound, let $\lambda$ be an optimal coupling of $\widehat{\nu}$ and $\nu$. Disintegrate it as
\[
d\lambda(\widehat{y},y)
=
d\widehat{\nu}(\widehat{y})\,Q(dy\mid \widehat{y}).
\]
Now define a new kernel
\[
K^\star(dy\mid x)
=
\int Q(dy\mid \widehat{y})\,\widehat{K}(d\widehat{y}\mid x).
\]
Then
\[
\int K^\star(\cdot\mid x)\,d\mu(x)=\nu,
\]
so $K^\star$ is a valid stochastic transport from $\mu$ to $\nu$. Moreover, for each $x$, the joint law
\[
\widehat{K}(d\widehat{y}\mid x)\,Q(dy\mid \widehat{y})
\]
is a coupling of $\widehat{K}(\cdot\mid x)$ and $K^\star(\cdot\mid x)$. Hence
\begin{align*}
\int W_p^p\bigl(\widehat{K}(\cdot\mid x),K^\star(\cdot\mid x)\bigr)\,d\mu(x)
&\leq
\int \|\widehat{y}-y\|_2^p
\,\widehat{K}(d\widehat{y}\mid x)\,Q(dy\mid \widehat{y})\,d\mu(x) \\
&=
\int \|\widehat{y}-y\|_2^p\,d\lambda(\widehat{y},y) \\
&=
W_p^p(\widehat{\nu},\nu).
\end{align*}
Together with the lower bound, this proves \eqref{eq:stochastic-projection}.
\end{proof}

We can similarly derive a parallel result to that of Lemma~\ref{lem:wasserstein-lower-bound-valid map}. 
If $\widehat K$ is a Markov kernel from $\mathbb R^d$ to $\mathbb R^d$, define its induced terminal law by
\begin{align*}
\widehat\nu(A):=\int \widehat K(A\mid x)\,d\mu(x),
\qquad A\subseteq\mathbb R^d.
\end{align*}
We can define the valid loss in the stochastic setting as:
\begin{align*}
\lvalids(\widehat{K})
:=
\inf_{K:\,\int K(\cdot\mid x)d\mu(x)=\nu}
\int W_2^2\bigl(\widehat K(\cdot\mid x),K(\cdot\mid x)\bigr)\,d\mu(x).
\end{align*}
Let $\validsminimaxrisk(\targetclass)$ denote the corresponding minimax risk.

\begin{lemma}
\label{lem:kernel-risk-equals-w2-risk}
For any $\mu\in\mathcal P_2(\mathbb R^d)$ and any $\targetclass\subseteq\mathcal P_2(\mathbb R^d)$,
\begin{align}
\label{eq:kernel-risk-equals-w2-risk}
\validsminimaxrisk(\targetclass)
=
\wtwominimaxrisk(\targetclass).
\end{align}
\end{lemma}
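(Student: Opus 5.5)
The plan is to prove the two inequalities separately, using Lemma~\ref{lem:stochastic-main} as a pointwise identity: for every realization of the data, the stochastic valid loss of a kernel estimator equals $W_2^2(\widehat\nu,\nu)$, where $\widehat\nu$ is the kernel's terminal law. The minimax statement then follows by matching estimators in the two problems.

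For $\validsminimaxrisk(\targetclass)\geq\wtwominimaxrisk(\targetclass)$, fix any kernel estimator $\widehat K$, which is built from $Y_1,\ldots,Y_n$ and the known $\mu$. Its terminal law $\widehat\nu=\int\widehat K(\cdot\mid x)\,d\mu(x)$ is then a measurable distribution estimator based on the same sample. Applying Lemma~\ref{lem:stochastic-main} with $p=2$ gives $\lvalids(\widehat K)=W_2^2(\widehat\nu,\nu)$ for every realization. Taking expectations and the supremum over $\nu\in\targetclass$ yields
\[
\sup_{\nu}\mathbb E_\nu\lvalids(\widehat K)=\sup_\nu\mathbb E_\nu W_2^2(\widehat\nu,\nu)\geq\wtwominimaxrisk(\targetclass),
\]
and taking the infimum over $\widehat K$ gives the lower bound.

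For the reverse inequality, fix any distribution estimator $\widehat\nu$ and define the constant kernel $\widehat K(\cdot\mid x):=\widehat\nu$ for all $x$. It ignores the source point, but it is a legitimate Markov kernel whose terminal law is $\int\widehat\nu\,d\mu=\widehat\nu$. Lemma~\ref{lem:stochastic-main} again gives $\lvalids(\widehat K)=W_2^2(\widehat\nu,\nu)$, so its worst-case risk equals that of $\widehat\nu$. Taking the infimum over $\widehat\nu$ gives $\validsminimaxrisk\leq\wtwominimaxrisk$.

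The only delicate step is technical: checking measurability with respect to the data and finiteness of second moments. Specifically, the map from data to $\widehat\nu$ must be a measurable distribution estimator, and $\widehat\nu$ must lie in $\mathcal P_2$ so that Lemma~\ref{lem:stochastic-main} applies. If $\widehat\nu$ has an infinite second moment, both sides of the identity are $+\infty$, since no coupling with the $\nu\in\mathcal P_2$ can have finite cost, so the identity persists. I would therefore adopt the convention that estimators may take values in $\mathcal P(\mathbb R^d)$, with infinite losses allowed, in both minimax problems.
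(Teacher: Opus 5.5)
Your proof is correct and follows the paper's argument: the lower bound comes from treating the terminal law $\widehat\nu$ of any kernel estimator as a distribution estimator, and the upper bound from the constant kernel $\widehat K(\cdot\mid x)=\widehat\nu$. The only cosmetic difference is that the paper does not invoke the full equality of Lemma~\ref{lem:stochastic-main}. For the lower bound it uses just the easy coupling-gluing inequality, and for the upper bound it uses the trivial valid kernel $K(\cdot\mid x)=\nu$, so the disintegration step is never needed.
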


\begin{proof}
We first prove the lower bound. Fix any stochastic estimator $\widehat K$, and let $\widehat\nu$ be its induced terminal law. For any valid kernel $K$ from $\mu$ to $\nu$, the collection of optimal couplings between $\widehat K(\cdot\mid x)$ and $K(\cdot\mid x)$, integrated over $x\sim\mu$, gives a coupling of $\widehat\nu$ and $\nu$. Therefore,
\begin{align*}
\int W_2^2\bigl(\widehat K(\cdot\mid x),K(\cdot\mid x)\bigr)\,d\mu(x)
\geq
W_2^2(\widehat\nu,\nu).
\end{align*}
Taking the infimum over valid $K$ gives
\[
\lvalids(\widehat K)
\geq
W_2^2(\widehat\nu,\nu),
\]
and the same minimax argument as in Lemma~\ref{lem:wasserstein-lower-bound-valid map} yields
\[
\validsminimaxrisk(\targetclass)
\geq
\wtwominimaxrisk(\targetclass).
\]

For the reverse inequality, fix any distribution estimator $\widetilde\nu$ and define the data-dependent kernel
\[
\widehat K(\cdot\mid x):=\widetilde\nu(\cdot),
\qquad x\in\mathbb R^d.
\]
This kernel ignores $x$ and has induced terminal law $\widetilde\nu$. For any target $\nu$, the kernel $K(\cdot\mid x):=\nu(\cdot)$ is valid from $\mu$ to $\nu$, and hence
\begin{align*}
\lvalids(\widehat K)
&\leq
\int W_2^2(\widetilde\nu,\nu)\,d\mu(x)
=
W_2^2(\widetilde\nu,\nu).
\end{align*}
Taking expectation, supremum over $\nu$, and infimum over $\widetilde\nu$ proves the reverse inequality.
\end{proof}

\section{Detailed Proofs}
\label{app:proofs}

\subsection{Proof of Lemma~\ref{lem:main}}

We first prove the lower bound. Fix any measurable map $T$ such that $T_{\#}\mu=\nu$, and let $X\sim\mu$. Then
$(\widehat{T}(X),T(X))$
is a coupling of $\widehat{\nu}$ and $\nu$. Therefore, by the definition of the Wasserstein distance,
\[
\int \|\widehat{T}(x)-T(x)\|_2^p\,d\mu(x)
=
\mathbb{E}\|\widehat{T}(X)-T(X)\|_2^p
\geq
W_p^p(\widehat{\nu},\nu).
\]
Taking the infimum over all $T$ satisfying $T_{\#}\mu=\nu$ gives \eqref{eq:valid-risk-lower}.

Now suppose that there exists an optimal Monge map $S$ from $\widehat{\nu}$ to $\nu$. Define
\[
T^\star := S\circ \widehat{T}.
\]
Then
\[
T^\star_{\#}\mu
=
S_{\#}(\widehat{T}_{\#}\mu)
=
S_{\#}\widehat{\nu}
=
\nu,
\]
so $T^\star$ is a valid transport map from $\mu$ to $\nu$. Moreover,
\[
\int \|\widehat{T}(x)-T^\star(x)\|_2^p\,d\mu(x)
=
\int \|\widehat{T}(x)-S(\widehat{T}(x))\|_2^p\,d\mu(x).
\]
Since $\widehat{T}_{\#}\mu=\widehat{\nu}$, the last display equals
\[
\int \|z-S(z)\|_2^p\,d\widehat{\nu}(z)
=
W_p^p(\widehat{\nu},\nu).
\]
Combining this upper bound with \eqref{eq:valid-risk-lower} proves equality.

\subsection{Proof of Lemma~\ref{lem:wasserstein-lower-bound-valid map}}
Fix any map estimator $\widehat T$, and write
\[
\widehat\nu:=\widehat T_\#\mu.
\]
Then $\widehat\nu$ is a distribution estimator of $\nu$. By Lemma~\ref{lem:main}, for every $\nu\in\targetclass$,
\begin{align*}
\inf_{T_\#\mu=\nu}
\int \|\widehat T(x)-T(x)\|_2^2\,d\mu(x)
\geq
W_2^2(\widehat\nu,\nu).
\end{align*}
Taking expectation, supremum over $\nu\in\targetclass$, and then infimum over $\widehat T$ gives
\begin{align*}
\validminimaxrisk(\targetclass)
&\geq
\inf_{\widehat T}
\sup_{\nu\in\targetclass}
\mathbb E_\nu W_2^2(\widehat T_\#\mu,\nu) \\
&\geq
\inf_{\widehat\nu}
\sup_{\nu\in\targetclass}
\mathbb E_\nu W_2^2(\widehat\nu,\nu),
\end{align*}
where the last infimum is over all distribution estimators, a larger class than those necessarily induced by map estimators. This proves the claim.

\subsection{Proof of Proposition~\ref{prop:stability-sandwich}}
The first inequality is Lemma~\ref{lem:wasserstein-lower-bound-valid map}. The second inequality follows pointwise from the fact that the OT map $T_\nu$ is one valid transport map from $\mu$ to $\nu$, so
\[
\inf_{T_\#\mu=\nu}
\int \|\widehat T(x)-T(x)\|_2^2\,d\mu(x)
\leq
\int \|\widehat T(x)-T_\nu(x)\|_2^2\,d\mu(x).
\]
For the last inequality, take any distribution estimator $\widehat\nu$ and output the plug-in OT map $T_{\widehat\nu}$. By \eqref{eq:abstract-stability-assumption}, its OT-map risk is at most $C_{\rm stab}\mathbb E_\nu W_2^2(\widehat\nu,\nu)$. Taking the supremum over $\nu\in\mathfrak N$ and then the infimum over $\widehat\nu$ gives the desired bound.

\subsection{Proof of Lemma~\ref{lem:twopoint-separation}}
This result follows from a standard reduction from estimation to testing (see~\cite{tsybakov2008}).
Fix an estimator $\widehat T$. It induces a test
\[
\psi(Y_1,\ldots,Y_n)
\in
\argmin_{i\in\{0,1\}}
\|\widehat T-T_i\|_{L^2(\mu)}.
\]
On the event $\{\psi\neq i\}$, the triangle inequality gives
\[
\Delta_{\OT}
=
\|T_0-T_1\|_{L^2(\mu)}
\leq
\|\widehat T-T_i\|_{L^2(\mu)}+
\|\widehat T-T_{1-i}\|_{L^2(\mu)}
\leq
2\|\widehat T-T_i\|_{L^2(\mu)}.
\]
Thus
\[
\|\widehat T-T_i\|_{L^2(\mu)}^2
\geq
\frac14\Delta_{\OT}^2\mathbf 1\{\psi\neq i\}.
\]
Taking expectations, then the maximum over $i$, and finally the infimum over $\widehat T$, proves \eqref{eq:twopoint-ot-lower}.

For the valid map upper bound, fix $S_0,S_1$ with $S_{i\#}\mu=\nu_i$ and use the estimator $\widehat T=S_0$, which ignores the data. Under $\nu_0$ its valid map loss is zero, while under $\nu_1$ its valid map loss is at most $\|S_0-S_1\|_{L^2(\mu)}^2$, since $S_1$ is an admissible valid map. This proves \eqref{eq:twopoint-valid-upper}.

\subsection{Proof of Lemma~\ref{lem:rotating-geometry}}
The identity \eqref{eq:rotating-map-gap} follows by direct calculation. Let $s_0(x)=\sgn(x\cdot u_0)$ and $s_\theta(x)=\sgn(x\cdot u_\theta)$. Since the linear terms cancel,
\[
T_\theta(x)-T_0(x)=a\{s_\theta(x)u_\theta-s_0(x)u_0\}.
\]
The signs $s_0$ and $s_\theta$ disagree on two wedges whose total $\mu$-mass is $\theta/\pi$. On the complement, the squared difference is $4a^2\sin^2(\theta/2)$; on the wedges, it is $4a^2\cos^2(\theta/2)$. Therefore
\[
\|T_\theta-T_0\|_{L^2(\mu)}^2
=
4a^2\left[
\sin^2(\theta/2)\left(1-\frac{\theta}{\pi}\right)
+
\cos^2(\theta/2)\frac{\theta}{\pi}
\right],
\]
which is comparable to $\theta$ for small $\theta$.

Next observe that $\nu_\theta=(R_\theta)_\#\nu_0$, where $R_\theta$ denotes rotation by angle $\theta$. Thus the coupling $Y\mapsto R_\theta Y$, with $Y\sim\nu_0$, gives
\[
W_2^2(\nu_0,\nu_\theta)
\leq
\mathbb E\|Y-R_\theta Y\|_2^2
\lesssim
\theta^2,
\]
because $\nu_0$ is supported in a fixed bounded set.

Finally, $\nu_0$ has density $(\pi)^{-1}\mathbf 1_{\Omega_0}$, where $\Omega_0$ is a finite union of half-disks, and $\nu_\theta$ has density $(\pi)^{-1}\mathbf 1_{R_\theta\Omega_0}$. Since $\Omega_0$ has finite perimeter and is bounded,
\[
|\Omega_0\triangle R_\theta\Omega_0|\lesssim\theta.
\]
Consequently $H^2(\nu_0,\nu_\theta)\lesssim \theta$.

\subsection{Proof of Theorem~\ref{thm:john-polynomial-separation}}
By Lemma~\ref{lem:rotating-geometry}, $H^2(\nu_0,\nu_{\theta_n})\lesssim\theta_n$. Choosing $\kappa>0$ small enough ensures that $nH^2(\nu_0,\nu_{\theta_n})$ is bounded by a sufficiently small universal constant. Hence the product measures $\nu_0^{\otimes n}$ and $\nu_{\theta_n}^{\otimes n}$ have total variation distance bounded away from one, and the testing error $e_n$ in Lemma~\ref{lem:twopoint-separation} is bounded below by a universal constant. Combining Lemma~\ref{lem:twopoint-separation} with \eqref{eq:rotating-map-gap} gives
\[
\otminimaxrisk(\targetclass_n)
\gtrsim
\theta_n
\gtrsim
\frac{1}{n}.
\]

For the valid map upper bound, use the estimator $\widehat T=T_0$, which ignores the data. The loss is zero under $\nu_0$. Under $\nu_{\theta_n}$, the map $R_{\theta_n}T_0$ is a valid transport from $\mu$ to $\nu_{\theta_n}$. Therefore
\[
\inf_{S_\#\mu=\nu_{\theta_n}}
\|T_0-S\|_{L^2(\mu)}^2
\leq
\|T_0-R_{\theta_n}T_0\|_{L^2(\mu)}^2
\lesssim
\theta_n^2,
\]
where the last inequality again uses boundedness of the support of $\nu_0$. This proves \eqref{eq:john-valid-upper}.

\subsection{Proof of Lemma~\ref{lem:assouad-map-hellinger}}
Fix an arbitrary estimator $\widehat T$. For each $j$, write
\[
\|f\|_{j}^{2}
:=
\int_{G_j}\|f(x)\|_2^2\,d\mu(x).
\]
Fix $j$ and fix the signs $\sigma_{-j}$ outside coordinate $j$. Let
\[
\sigma^+=(\sigma_{-j},+1),
\qquad
\sigma^-=(\sigma_{-j},-1).
\]
Define a test of the $j$th bit by nearest-neighbor decoding in the local
$L^2(G_j,\mu)$ distance:
\[
\widehat \sigma_j
\in
\argmin_{\tau\in\{\pm1\}}
\|\widehat T-T_{(\sigma_{-j},\tau)}\|_j^2 .
\]
Ties may be broken arbitrarily.

If the true sign is $+1$ and $\widehat \sigma_j=-1$, then
\[
\|\widehat T-T_{\sigma^-}\|_j
\leq
\|\widehat T-T_{\sigma^+}\|_j .
\]
Therefore, by the triangle inequality,
\[
\|T_{\sigma^+}-T_{\sigma^-}\|_j
\leq
\|T_{\sigma^+}-\widehat T\|_j
+
\|\widehat T-T_{\sigma^-}\|_j
\leq
2\|\widehat T-T_{\sigma^+}\|_j .
\]
Thus
\[
\|\widehat T-T_{\sigma^+}\|_j^2
\geq
\frac14
\|T_{\sigma^+}-T_{\sigma^-}\|_j^2
\geq
\frac{\Delta_j}{4}
\]
on the event $\{\widehat\sigma_j=-1\}$. The same argument with the signs
reversed gives
\[
\|\widehat T-T_{\sigma^-}\|_j^2
\geq
\frac{\Delta_j}{4}
\]
on the event $\{\widehat\sigma_j=+1\}$.

Hence
\begin{align*}
&\frac12
\mathbb E_{\sigma^+}
\|\widehat T-T_{\sigma^+}\|_j^2
+
\frac12
\mathbb E_{\sigma^-}
\|\widehat T-T_{\sigma^-}\|_j^2  \\
&\qquad\geq
\frac{\Delta_j}{4}
\left[
\frac12 P_{\sigma^+}^{\otimes n}(\widehat\sigma_j=-1)
+
\frac12 P_{\sigma^-}^{\otimes n}(\widehat\sigma_j=+1)
\right].
\end{align*}
The term in brackets is the average error probability of a test between
$P_{\sigma^+}^{\otimes n}$ and $P_{\sigma^-}^{\otimes n}$. For any two distributions $P,Q$,
the minimal average testing error under the uniform prior is $\frac12(1-\TV(P,Q)).$
Moreover, $\TV(P,Q)\leq \sqrt{1-\rho(P,Q)^2}.$
Using \eqref{eq:assouad-affinity}, we obtain
\[
\frac12 P^{\otimes n}_{\sigma^+}(\widehat\sigma_j=-1)
+
\frac12 P^{\otimes n}_{\sigma^-}(\widehat\sigma_j=+1)
\geq
\frac12\left(1-\sqrt{1-\rho_0^2}\right).
\]
Therefore, for every fixed $\sigma_{-j}$,
\begin{align}
\label{eq:assouad-local-risk}
\frac12
\mathbb E_{\sigma^+}
\|\widehat T-T_{\sigma^+}\|_j^2
+
\frac12
\mathbb E_{\sigma^-}
\|\widehat T-T_{\sigma^-}\|_j^2
\geq
\frac{\Delta_j}{8}
\left(1-\sqrt{1-\rho_0^2}\right).
\end{align}

Now average \eqref{eq:assouad-local-risk} over all choices of
$\sigma_{-j}$ and sum over $j$. Since the sets $G_1,\ldots,G_M$ are
disjoint,
\begin{align*}
2^{-M}
\sum_{\sigma\in\{\pm1\}^M}
\mathbb E_\sigma
\int
\|\widehat T(x)-T_\sigma(x)\|_2^2\,d\mu(x)
&\geq
\sum_{j=1}^M
2^{-M}
\sum_{\sigma\in\{\pm1\}^M}
\mathbb E_\sigma
\|\widehat T-T_\sigma\|_j^2 \\
&\geq
\frac{1-\sqrt{1-\rho_0^2}}{8}
\sum_{j=1}^M \Delta_j .
\end{align*}
The supremum over $\sigma$ is at least the average over $\sigma$, and the
display holds for every estimator $\widehat T$. Taking the infimum over
$\widehat T$ proves the result.

\subsection{Proof of Theorem~\ref{thm:exponential-separation}}

We first record a simple geometric fact which helps to identify the OT map
in our construction.

\begin{lemma}[Isolated blob matching]
\label{lem:isolated-blob-matching}
Let
\[
S_i=B(x_i,r),\qquad C_i=B(y_i,r),\qquad i=1,\ldots,N,
\]
be pairwise disjoint source and target balls. Suppose that $\mu$ assigns
mass $p_i$ uniformly to $S_i$ and $\nu$ assigns the same mass $p_i$
uniformly to $C_i$. Assume that the prescribed matching $S_i\mapsto C_i$
satisfies
\begin{align}
\label{eq:isolated-blob-margin}
\max_i \sup_{x\in S_i,\ y\in C_i}\|x-y\|_2
<
\min_{i\neq k}\inf_{x\in S_i,\ y\in C_k}\|x-y\|_2 .
\end{align}
Then every optimal coupling between $\mu$ and $\nu$ is supported on
\[
\bigcup_{i=1}^N S_i\times C_i.
\]
Consequently, if $\mu$ is absolutely continuous, the OT map sends
$S_i$ to $C_i$ for every $i$. Since $C_i$ is a translate of $S_i$, this
map is the translation
\[
T(x)=x+(y_i-x_i),\qquad x\in S_i.
\]
\end{lemma}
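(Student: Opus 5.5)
The plan is to show that any coupling whose support leaves $\bigcup_i S_i\times C_i$ can be strictly improved. The improvement comes from a cyclical-monotonicity / cycle-decomposition argument that exploits the margin condition \eqref{eq:isolated-blob-margin}.

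Write $D:=\max_i \sup_{x\in S_i,y\in C_i}\|x-y\|_2$ and $\delta:=\min_{i\neq k}\inf_{x\in S_i,y\in C_k}\|x-y\|_2$, so that $D<\delta$ by assumption.

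\textbf{Step 1: reduce to a block mass matrix.} Let $\pi$ be an optimal coupling. Set $m_{ik}:=\pi(S_i\times C_k)$. Since the balls are disjoint and carry all the mass, the matrix $(m_{ik})$ has row sums $p_i$ and column sums $p_k$. Suppose $\pi$ is not supported on $\bigcup_i S_i\times C_i$, so $\epsilon:=\sum_{i\neq k} m_{ik}>0$.

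\textbf{Step 2: decompose into cycles.} Because row and column sums agree, the off-diagonal mass of $(m_{ik})$ is a circulation on the complete directed graph on $\{1,\dots,N\}$. Its off-diagonal part therefore decomposes into directed cycles $i_1\to i_2\to\cdots\to i_\ell\to i_1$ carrying positive weight.

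\textbf{Step 3: build a cheaper competitor.} A more direct route than invoking cyclical monotonicity pointwise is the following. Let $\pi^\star:=\sum_i p_i\,(\mu|_{S_i}/p_i)\otimes(\nu|_{C_i}/p_i)$; any coupling supported on the diagonal blocks would do. Its cost is at most $D^2$ per unit mass. For $\pi$, every unit of off-diagonal mass costs at least $\delta^2$, and the diagonal mass also costs something. Comparing totals directly does not work, however, because the diagonal parts could differ.

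Instead I glue. Define $\pi'$ by keeping $\pi$ restricted to the diagonal blocks. For the off-diagonal part, the source marginal of $\pi|_{\bigcup_{k\neq i}S_i\times C_k}$ restricted to $S_i$ is some measure $\alpha_i$ of mass $r_i:=\sum_{k\neq i}m_{ik}$. Its target marginal on $C_i$ is some $\beta_i$ of mass $\sum_{k\neq i}m_{ki}$, which equals $r_i$ because the row and column sums coincide. Replace the off-diagonal part by $\sum_i \alpha_i\otimes\beta_i/r_i$. This $\pi'$ has the same marginals as $\pi$. Its off-diagonal replacement costs at most $D^2\epsilon$, while the removed part costs at least $\delta^2\epsilon>D^2\epsilon$. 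So $\pi'$ is strictly cheaper, which contradicts optimality. This holds for any cost increasing in distance; here the cost is squared Euclidean.

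\textbf{Step 4: the map.} If $\mu$ is absolutely continuous, Brenier's theorem gives a unique optimal coupling, induced by a map $T$. Step 3 shows that $T(S_i)\subseteq C_i$ $\mu$-a.e., so $T|_{S_i}$ is the OT map from the uniform law on $S_i$ to the uniform law on $C_i$. The translation $x\mapsto x+(y_i-x_i)$ is the gradient of the convex function $\tfrac12\|x\|^2+\langle y_i-x_i,x\rangle$ and pushes the first law to the second. By uniqueness in Brenier's theorem, $T$ is this translation on each $S_i$. Gluing across $i$ is consistent because the restrictions of an optimal plan are optimal.

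The main obstacle is Step 3: one has to check carefully that the glued $\pi'$ has the correct marginals and that the strict inequality uses $\epsilon>0$. Everything else is routine.
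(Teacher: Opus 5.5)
Your argument is correct, but it reaches the conclusion by a different route than the paper.

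\textbf{The paper's route.} The paper invokes the $c$-cyclical monotonicity of the support of an optimal plan. It uses the balance of row and column sums to find a directed cycle $i_1\to\cdots\to i_\ell\to i_1$ of off-diagonal blocks charged by $\pi$. It then picks support points $(x_q,y_q)\in S_{i_q}\times C_{i_{q+1}}$. Finally, the margin condition shows that the cyclic reassignment $x_q\mapsto y_{q-1}$ strictly lowers $\sum_q\|x_q-y_q\|_2^2$, which contradicts cyclical monotonicity.

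\textbf{Your route.} You instead build an explicit competitor. Replacing the off-diagonal part of $\pi$ by $\sum_i \alpha_i\otimes\beta_i/r_i$ preserves both marginals exactly because $r_i=p_i-m_{ii}=\sum_{k\neq i}m_{ki}$. This replacement lowers the cost by at least $(\delta^2-D^2)\epsilon>0$.

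\textbf{What each buys.} Your version uses only the definition of optimality. It needs no appeal to the structure theorem for supports of optimal plans, and no selection of points in the support. It applies verbatim to any cost with $\sup_{S_i\times C_i}c<\inf_{S_i\times C_k,\,i\neq k}c$. It also yields a quantitative cost gap proportional to the misallocated mass. The paper's version is the more standard pointwise argument.

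\textbf{Minor points.} Your Step 2 (the cycle decomposition) is never used in Step 3, so it can be dropped, together with the abandoned comparison to $\pi^\star$. Step 4 matches the paper's conclusion via restriction of the optimal plan to each block and uniqueness of the Brenier map.
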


\begin{proof}
Let $\pi$ be an optimal coupling. Since optimal couplings for the squared
Euclidean cost are cyclically monotone, the support of $\pi$ is
$c$-cyclically monotone, where $c(x,y)=\|x-y\|_2^2$.

Suppose, for contradiction, that $\pi(S_i\times C_k)>0$ for some
$i\neq k$. Consider the directed graph on $\{1,\ldots,N\}$ in which we
draw an edge $i\to k$ whenever $\pi(S_i\times C_k)>0$. Because the
source and target masses of each component are the same, any off-diagonal
edge belongs to a directed cycle. Thus there exist distinct indices
$i_1,\ldots,i_\ell$ and points
\[
(x_q,y_q)\in \text{supp}(\pi)\cap (S_{i_q}\times C_{i_{q+1}}),
\qquad q=1,\ldots,\ell,
\]
where $i_{\ell+1}=i_1$.

By \eqref{eq:isolated-blob-margin}, each $x_q\in S_{i_q}$ is strictly
closer to every point of $C_{i_q}$ than to every point of
$C_{i_{q+1}}$. Since $y_{q-1}\in C_{i_q}$, we have
\[
\|x_q-y_q\|_2^2>\|x_q-y_{q-1}\|_2^2,
\]
where $y_0:=y_\ell$. Summing over $q$ gives
\[
\sum_{q=1}^\ell \|x_q-y_q\|_2^2
>
\sum_{q=1}^\ell \|x_q-y_{q-1}\|_2^2,
\]
which contradicts cyclic monotonicity. Therefore $\pi$ has no
off-diagonal mass.

It follows that the restriction of $\pi$ to each $S_i\times C_i$ is an
optimal coupling between the uniform measure on $S_i$ and the uniform
measure on $C_i$. Since $C_i=S_i+(y_i-x_i)$, the translation
$x\mapsto x+(y_i-x_i)$ is an optimal map. When the source is absolutely
continuous, the OT map is unique $\mu$-a.e., and hence the OT
map agrees with this translation on each $S_i$.
\end{proof}

We now verify \eqref{eq:isolated-blob-margin} for our
construction. At the level of centers, the assigned source-target
distance in each block is
\[
\|x_{j,L}-y_{j,L}^{\sigma}\|_2^2
=
\|x_{j,R}-y_{j,R}^{\sigma}\|_2^2
=
a^2+(\varepsilon-\theta_n)^2.
\]
The closest incorrectly assigned target center is the opposite target
center in the same block, for which the squared distance is $a^2+(\varepsilon+\theta_n)^2.$ 
Indeed, target centers in different blocks have horizontal distance at least $3\varepsilon-\theta_n$ from the source center under consideration, whereas the opposite target center in the same block has horizontal distance $\varepsilon+\theta_n$; since $\theta_n\ll \varepsilon$, the latter is the closest incorrect target center.
Thus the center-level distance gap is
\[
\sqrt{a^2+(\varepsilon+\theta_n)^2}
-
\sqrt{a^2+(\varepsilon-\theta_n)^2}
=
\frac{4\varepsilon\theta_n}{\sqrt{a^2+(\varepsilon+\theta_n)^2} + \sqrt{a^2+(\varepsilon-\theta_n)^2}} \gtrsim_a \varepsilon \theta_n.
\]
All centers remain in a fixed bounded set. Passing from centers to balls
of radius $r_n$ perturbs distances by at most $C r_n$, where
$C$ depends only on the diameter of this bounded set and on $a$. Since
$r_n\ll \varepsilon\theta_n$, the strict separation
\eqref{eq:isolated-blob-margin} holds for all sufficiently large $n$.
Therefore the OT map $T_\sigma$ from $\mu$ to $\nu_\sigma$ sends
the source ball around $x_{j,L}$ to the target ball around
$y_{j,L}^{\sigma}$ and the source ball around $x_{j,R}$ to the target
ball around $y_{j,R}^{\sigma}$, by translation.

\begin{lemma}
\label{lem:lower-bound-analysis}
There exists a constant $c_a>0$, depending only on $a$,
such that if
\[
r_n \leq c_a \varepsilon \theta_n,
\]
then, for all sufficiently large $n$, the following statements hold:
\begin{enumerate}

\item For every $\sigma\in\{\pm1\}^n$ and every $j=1,\ldots,n$,
\[
\int_{G_j}
\|T_\sigma(x)-T_{\sigma^{(j)}}(x)\|_2^2\,d\mu(x)
=
4a^2\varepsilon.
\]
Thus the local separation condition in Lemma~\ref{lem:assouad-map-hellinger}
holds with
\[
\Delta_j=4a^2\varepsilon.
\]

\item If $P_\sigma=\nu_\sigma$, then for every $\sigma$ and every $j$,
\[
\rho(P_\sigma^{\otimes n},P_{\sigma^{(j)}}^{\otimes n})
=
(1-\varepsilon)^n.
\]
In particular, since $\varepsilon=1/n$, there is a universal constant
$\rho_0>0$ such that
\[
\rho(P_\sigma^{\otimes n},P_{\sigma^{(j)}}^{\otimes n})
\geq \rho_0
\]
for all sufficiently large $n$.
\end{enumerate}
\end{lemma}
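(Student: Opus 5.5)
Throughout, take $G_j := B_{j,L}\cup B_{j,R}$, the two source balls of block $j$. These sets are pairwise disjoint because $r_n\ll\varepsilon$ and consecutive blocks are $4\varepsilon$ apart. The plan is to read both claims off the explicit description of the OT map obtained just before the lemma from Lemma~\ref{lem:isolated-blob-matching}. Both claims then reduce to bookkeeping about which balls move when a single sign is flipped. The constant $c_a$ is the one needed to make the margin condition \eqref{eq:isolated-blob-margin} hold, which has already been verified for $r_n\le c_a\varepsilon\theta_n$ and $n$ large.

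For part 1, I use the fact that on $B_{j,L}$ the map $T_\sigma$ is the translation by $y^{\sigma}_{j,L}-x_{j,L}=(\varepsilon-\theta_n,\,-\sigma_j a)$. On $B_{j,R}$ it is the translation by $y^{\sigma}_{j,R}-x_{j,R}=(\theta_n-\varepsilon,\,\sigma_j a)$. These translations depend on $\sigma$ only through $\sigma_j$. Flipping $\sigma_j$ leaves the horizontal component unchanged and negates the vertical one. Hence $T_\sigma(x)-T_{\sigma^{(j)}}(x)=(0,\mp 2\sigma_j a)$ pointwise on $G_j$, and its squared norm is exactly $4a^2$. Since $\mu(G_j)=\varepsilon$, integrating gives $4a^2\varepsilon$, with no dependence on $\sigma_{-j}$.

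For part 2, I first tensorize: $\rho(P^{\otimes n},Q^{\otimes n})=\rho(P,Q)^n$. It then suffices to show $\rho(\nu_\sigma,\nu_{\sigma^{(j)}})=1-\varepsilon$. The two measures coincide off block $j$ of the targets, and that common part contributes mass $1-\varepsilon$ to the affinity integral. Within block $j$, $\nu_\sigma$ charges the balls centered at $(u_j+\theta_n,\sigma_j a)$ and $(u_j-\theta_n,-\sigma_j a)$, while $\nu_{\sigma^{(j)}}$ charges those centered at $(u_j+\theta_n,-\sigma_j a)$ and $(u_j-\theta_n,\sigma_j a)$. Any two of these four centers differ by at least $2\theta_n$ horizontally or $2a$ vertically. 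Since $r_n\ll\varepsilon\theta_n<\theta_n$ and $a$ is fixed, all four balls are pairwise disjoint and also disjoint from the other blocks. So the densities of $\nu_\sigma$ and $\nu_{\sigma^{(j)}}$ have disjoint supports on block $j$, and block $j$ contributes $0$ to $\int\sqrt{d\nu_\sigma\,d\nu_{\sigma^{(j)}}}$. This yields $\rho=(1-\varepsilon)^n=(1-1/n)^n$, which converges to $e^{-1}$. Therefore, for instance, $\rho_0=1/4$ works for all large $n$.

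The only delicate step is the identification of the OT maps, which depends on the margin condition and the choice of $r_n$. That step has already been carried out before the statement. The remaining step needing care is checking the disjointness of the four target balls in block $j$, which holds because $r_n<\theta_n$.
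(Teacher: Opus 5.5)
Your proposal is correct and follows essentially the same route as the paper. You read off the translations from the isolated-blob identification of $T_\sigma$ to get the pointwise difference $(0,\pm 2a)$ on $G_j$, and you use that $\nu_\sigma$ and $\nu_{\sigma^{(j)}}$ agree off block $j$ and have disjoint supports on it, so the one-sample affinity is $1-\varepsilon$ and tensorizes to $(1-1/n)^n$. Your extra checks (disjointness of the block-$j$ target balls from the other blocks, and the explicit choice $\rho_0=1/4$) are details the paper leaves implicit.
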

\begin{proof}
Let $\sigma^{(j)}$ be
obtained from $\sigma$ by flipping the $j$th sign. On the left source ball,
\[
y_{j,L}^{\sigma^{(j)}}-y_{j,L}^{\sigma}
=
(0,2\sigma_j a),
\]
and on the right source ball,
\[
y_{j,R}^{\sigma^{(j)}}-y_{j,R}^{\sigma}
=
(0,-2\sigma_j a).
\]
Thus, for every $x\in G_j$,
\[
\|T_\sigma(x)-T_{\sigma^{(j)}}(x)\|_2^2=4a^2.
\]
Since $\mu(G_j)=\varepsilon$, it follows that
\[
\int_{G_j}
\|T_\sigma(x)-T_{\sigma^{(j)}}(x)\|_2^2\,d\mu(x)
=
4a^2\varepsilon.
\]

It remains to verify the Hellinger affinity condition. The distributions
$\nu_\sigma$ and $\nu_{\sigma^{(j)}}$ agree on every block except block
$j$, whose total mass is $\varepsilon$. Inside block $j$, their supports
are disjoint for all sufficiently large $n$, since
$r_n\ll \varepsilon\theta_n\leq \theta_n$ and $a>0$ is fixed. Hence the
one-sample Hellinger affinity is exactly the common mass:
\[
\rho(\nu_\sigma,\nu_{\sigma^{(j)}})
=
1-\varepsilon.
\]
Hellinger affinity tensorizes under products, so
\[
\rho(\nu_\sigma^{\otimes n},\nu_{\sigma^{(j)}}^{\otimes n})
=
\rho(\nu_\sigma,\nu_{\sigma^{(j)}})^n
=
(1-\varepsilon)^n.
\]
Since $\varepsilon=1/n$,
\[
(1-\varepsilon)^n
=
\left(1-\frac1n\right)^n
\]
is bounded below by a positive universal constant for all sufficiently
large $n$. This proves the Hellinger affinity condition and completes the
proof.
\end{proof}

An immediate consequence of this result is that 
the family we constructed satisfies the assumptions of
Lemma~\ref{lem:assouad-map-hellinger} with $M=n$,
\[
\Delta_j=4a^2\varepsilon,
\qquad j=1,\ldots,n,
\]
and with a universal Hellinger affinity lower bound $\rho_0>0$, which in turn yields the lower bound on $\otminimaxrisk(\targetclass_n)$.

\subsubsection{Upper Bounds on the Valid Map Risk}
To conclude the proof of the theorem we need to upper bound $\validminimaxrisk(\targetclass_n)$.
Let $o = (1,\ldots,1)$ and use the deterministic estimator
\[
\widehat T:=T_o.
\]
Fix any true sign vector $\sigma$. We construct a valid map $S_\sigma$
from $\mu$ to $\nu_\sigma$ that stays close to $T_o$.

On block $j$, if $\sigma_j=1$, set $S_\sigma=T_o$. If
$\sigma_j=-1$, then $T_o$ sends the right source blob to the
upper target blob centered at $(u_j+\theta_n,a)$ and the left source blob
to the lower target blob centered at $(u_j-\theta_n,-a)$. Instead, define
$S_\sigma$ to send the right source blob to the upper target blob of
$\nu_\sigma$, centered at $(u_j-\theta_n,a)$, and the left source blob to
the lower target blob of $\nu_\sigma$, centered at
$(u_j+\theta_n,-a)$. This changes only the horizontal coordinate, by
$2\theta_n$.

Thus $S_{\sigma\#}\mu=\nu_\sigma$ and, on every source point,
\[
\|T_{o}(x)-S_\sigma(x)\|_2\leq 2\theta_n.
\]
Therefore
\[
\|T_{o}-S_\sigma\|_{L^2(\mu)}^2
\leq
4\theta_n^2.
\]
Since this holds for every $\sigma$, we have
\[
\validminimaxrisk(\targetclass_n)
\leq
4\theta_n^2
=
4e^{-2n},
\]
which proves \eqref{eq:exp-valid-upper}.

\end{document}